\documentclass{article}

\usepackage{microtype}
\usepackage{graphicx}
\usepackage{booktabs}
\usepackage{xcolor}
\usepackage{hyperref}
\usepackage{amsmath}
\usepackage{amssymb}

\newtheorem{proposition}{Proposition}

\newtheorem{corollary}{Corollary}

\usepackage[accepted]{icml2026}

\icmltitlerunning{Position: Prioritize Identifying Structure, Not Complex Models, for Scientific Discovery}

\begin{document}

\twocolumn[
  \icmltitle{Position: Prioritize Identifying Structure, Not Complex Models, for Scientific Discovery}

  \begin{icmlauthorlist}
    \icmlauthor{Tyler~H.~McCormick}{uw}
  \end{icmlauthorlist}
  \icmlaffiliation{uw}{Departments of Statistics and Sociology, University of Washington, Seattle, WA, USA}
  \icmlcorrespondingauthor{T.~H.~McCormick}{tylermc@uw.edu}

  \icmlkeywords{Identifiability, Machine Learning, Miasma, Scientific Discovery}

  \vskip 0.3in
]

\printAffiliationsAndNotice{}

\begin{abstract}
Modern Machine Learning (ML) and Artificial Intelligence (AI) models, especially large language models (LLMs), are increasingly used to generate scientific hypotheses and mechanistic explanations from observational data. This position paper argues that in the high-dimensional proxy regimes where modern ML excels, mechanistic learning is generically underdetermined: many incompatible mechanisms induce essentially the same observational relationships on the support of the data, so predictive success and coherent explanations are insufficient evidence of mechanism discovery. This underdetermination becomes uniquely hazardous with large language models (LLMs), which tend to collapse large equivalence classes of explanations into a single fluent narrative. This paper proposes concrete standards for ``mechanistic ML,'' and argues these norms are necessary if LLM-centered workflows are to support science rather than merely simulate it.
\end{abstract}

\section{Introduction}

\textbf{Position:} \textbf{Absent explicit \emph{identifying structure}, high-dimensional ``scientific discovery'' from observed features is generically underdetermined; predictive success and fluent explanations are insufficient evidence of mechanism discovery.} \textbf{Therefore, research should prioritize articulating and evaluating identifying assumptions and discriminating regimes, rather than building ever more complex models.}
There is a substantial and growing body of work that frames complex AI and ML models as tools for scientific discovery, including symbolic regression \citep{Schmidt2009,Udrescu2020}, sparse identification of dynamics and PDEs \citep{Brunton2016,Rudy2017}, physics-informed inverse problems \citep{Raissi2019}, and methods that extract interpretable hypotheses from high-dimensional text via sparse autoencoders and LLMs \citep{pmlr-v267-movva25a}. In the social sciences, the Fragile Families Challenge tested the value of prediction systems for social outcomes \citep{Salganik2020}, and ``agnostic'' approaches argue for more inductive, sequential discovery workflows \citep{Grimmer2021}. There is great promise in incorporating AI/ML tools in scientific discovery. 

This position paper, however, contends that using modern AI/ML tools agnostically to uncover mechanisms is inherently fraught.  Put simply, there are many representations of the relationship between observed features and outcomes that are consistent with any observed dataset. A representation can yield strong in-domain prediction and even convincing simulations, yet it provides no guarantee that the learned representation corresponds to the underlying, intervention-stable driver of the system.  The remedy is to make identifying structure explicit.  By \emph{identifying structure}, the paper means joint specification of (a) restrictions on the latent mechanism (what the system is allowed to do under interventions) and (b) restrictions on the observation process (how latent drivers generate measured proxies), together with (c) a data-collection regime that supplies discriminating variation (new environments, interventions, or measurement channels).
Identifying structure and complex models are not opposites. In many successful scientific-ML systems, complex models are exactly what make it possible to exploit identifying structure once it exists. %
This point echoes Cartwright's \emph{nomological machine}: law-like regularities arise when components with stable capacities are arranged in a sufficiently stable environment so that repeated operation yields repeatable behavior \citep{cartwright1999nomological}. %

The argument starts from a key observation: \emph{scientific workflows rarely observe the drivers of scientific processes directly.} Instead, they observe \emph{proxies} that are associated with one or more latent drivers. In the apocryphal story, Newton didn't see gravity; he felt a bump on his head from the falling apple.  \citet{mendel1866} observed phenotypes, traits such as seed shape and seed color, recorded for parent plants and their offspring.  These phenotypes are proxies that, when observed across a set of controlled crosses, gave clues about the underlying scientific mechanism, the transmission of discrete hereditary ``factors'' across generations.  Crucially, Mendel's success did not come from black-box prediction on high-dimensional proxies, but from \emph{identifying structure}: favorable biological architecture, carefully chosen discrete traits, controlled matings, replication, and a narrow hypothesis class (segregation and independent assortment) against which the proxy patterns could be tested and falsified \citep{mendel1866,fisher1936,curtis2023mendel}. For the purpose of this paper, a \emph{mechanistic query} is a concrete question about an underlying data-generating process, such as what would happen under an intervention, whether a relationship is stable across settings, which latent driver explains an observed pattern, or what sign a causal effect has. Such a query \(q(M)\) is identified when every mechanism compatible with the observed proxy evidence and stated assumptions gives the same answer, and partially identified when the compatible mechanisms give a bounded or otherwise structured set of possible answers. %

The fundamental challenge is that mechanistic inquiry needs to understand both \emph{how good the proxy is} (i.e., how strong the relationship is between the proxy and driving mechanism) and \emph{what it is a proxy for} (i.e., the underlying scientific mechanism).  The position here is that, in high-dimensional observational regimes, both cannot be identified without additional structure. The shift from bureaucratic statistical regimes toward brokerage regimes of passively collected digital traces makes this proxy problem increasingly central in social and administrative data \citep{fourcade2026census}. %
Unstructured discovery, at best, identifies an \emph{equivalence class} of mechanisms based on plausible relationships between the proxies and the mechanisms. Formally, the \emph{proxy observational law} \(P_X\) is the population distribution of the observed proxies under the observational regime at hand. For a fixed proxy observational law \(P_X\), let \(\mathcal M(P_X)\) denote the set of
mechanisms and measurement channels that induce \(P_X\). Without identifying structure, \(\mathcal M(P_X)\)
is typically not a singleton. \citet{ogburn2021warning} gives examples in the context of causal graphs.

Improving model performance or even the existence of a plausible, consistent explanatory channel is also not sufficient.  Coherence is not, by itself, scientific evidence. What's more, fixating on the wrong mechanism can have dramatic, longstanding consequences.  Take, for example, the miasma, or ``bad air'' theory of disease.  The proxies are environmental correlates of illness, such as odor, proximity to swamps/sewers, stagnant water, signs of decaying material, or meteorological conditions.  Many of these proxies are strongly correlated with outbreaks, providing seemingly ironclad evidence for the theory.  The strength of these associations, along with the tendency for path dependence in science~\citep{kuhn1997structure,soler2025would}, meant that the miasma theory remained pervasive even in the presence of mounting evidence for germ-based disease transmission. Mendel, on the other hand, succeeded because favorable biology, selected measurements, and discriminating design effectively made \(\mathcal M(P_X)\)
small for the mechanistic queries he asked.  Throughout, the term \emph{design} is used to refer to the structure of the evidence-generating regime, including which variables are measured, which labels or environments are recorded, which naturally occurring contrasts are available, and which interventions or controlled protocols, if any, produced the observations. %

A richer feature space might seem capable of identifying underlying mechanisms from proxies, but high-dimensional proxies often concentrate data on a thin effective support shaped by institutions, selection, and measurement. As the proxy dimension increases, the number of possible patterns that, spuriously or otherwise, give a representation that's consistent with the data also increases.  The inevitable result is that many incompatible mechanisms can agree closely on what they support yet diverge sharply under interventions or modest domain shifts. Formally, Appendix~\ref{app:stronger} shows that if $P_X$ is supported on a lower-dimensional set,
then off-support behavior is not identified for any rich function class without additional constraints. LLMs add a distinctive hazard. Recent work formalizes an ``inevitability'' result for open-world querying: for any fixed computable \emph{learner}, there exist computable target functions on which it must systematically err on infinitely
many inputs \citep{Xu2025}. This paper argues that an analogous inevitability holds for \emph{explanations}: for any observational evidence based on proxies, there exist multiple compatible but conflicting mechanistic stories.
What is new about LLM workflows is not underidentification itself, as identification limits are classical, but that LLMs \emph{industrialize the collapse} of underidentification into a single fluent narrative at scale.

This contention does not depend on a narrow definition of mechanism. The term means stable structural features that explain how a system responds to interventions: derivative constraints (which variables matter and with what sign), conservation laws, invariances, and causal pathways supporting policy counterfactuals \citep{Machamer2000,Woodward2003,Heckman2000}. Economists distinguish reduced-form associations, which may be accurate yet mechanistically agnostic \citep{Angrist2010,AtheyImbens2019}, from structural models encoding policy-relevant counterfactuals \citep{Heckman2000}.

The call to action is simple: for ML systems to contribute to scientific discovery, ``mechanistic learning'' must be treated as an identification problem, with research prioritizing the discovery and declaration of the identifying structure that makes mechanistic questions answerable from proxy data. %
Accordingly, any mechanistic claim must be paired with at least one of: (i) a clear statement of this identifying structure (and what remains unidentified), (ii) mechanism-discriminating evaluations that can shrink the proxy-compatible set (interventions, cross-environment invariance tests, derivative/shape constraints), or (iii) multiplicity reporting that characterizes the surviving equivalence class, including explicit falsifiers and sensitivity to the stated assumptions.

The paper proceeds as follows. Section~\ref{sec:setup} sets up mechanisms, proxies, and models; Section~\ref{sec:proxy_gap} shows that absent identifying structure, proxies identify mechanism equivalence classes rather than unique mechanisms; Section~\ref{sec:cod} explains how high dimensionality and thin support aggravate those classes; Section~\ref{sec:narrative} introduces LLM narrative collapse; Section~\ref{sec:uq} relates the ambiguity to aleatoric, epistemic, and structural identification uncertainty; Section~\ref{sec:example} gives the empirical example; Section~\ref{sec:cta} states the call to action; and Section~\ref{sec:alt} provides alternative views.

\section{Problem Setup and Terminology}\label{sec:setup}

Let \(S\) denote latent scientific drivers (``mechanism-level'' variables). The observed data consist of proxies \(X\) generated from \(S\) by a measurement process. Consider deterministic measurement \(X=f(S)\) and stochastic measurement channels \(X\mid S\sim K(\cdot\mid S)\),
where \(K\) is a Markov kernel from the latent space to the proxy space, covering additive noise, discretization,
aggregation, censoring, and other coarsenings \citep{fuller1987,carroll2006}.
Given a latent law \(P_S\) and measurement channel \(K\), the induced \emph{proxy observational law} is \(P_X(A)=\int K(A\mid s)\,P_S(ds)\) for measurable proxy events \(A\). The \emph{proxy gap} is the distinction between this observable law and the latent mechanism \(M=(P_S,K)\), or a mechanistic query \(q(M)\), when distinct mechanisms induce the same \(P_X\).
A modern workflow trains a predictor \(h\), often through a learned representation \(Z=\phi(X)\), to optimize an in-domain objective, such as predicting an outcome \(Y\) from proxies \(X\), which is distinct from recovering the latent \(S\).%

To move from proxies to mechanisms, many methods pool ``similar'' observations, either explicitly (clustering, trees, mixtures, nearest neighbors) or implicitly (representation learning that induces neighborhood smoothing). The workhorse assumption behind pooling is a symmetry choice often framed as \emph{exchangeability}: declaring certain differences irrelevant so that averaging is meaningful. Exchangeability yields powerful representation theorems \citep{definetti1937,aldous1981,kallenberg2005,diaconis1980}. But representation is not identification: the theorems specify what must be true \emph{if} a symmetry holds, but do not guarantee the symmetry is correct, nor that a unique mechanism is pinned down by proxies. Kleinberg’s clustering impossibility theorem makes this concrete \citep{kleinberg2002}. This paper makes three claims.  Proofs of the three forthcoming Propositions are in Appendix~\ref{app:proofs}. For clarity, the objects relate as:
\[
S \xrightarrow[\text{measurement}]{K \text{ or } f} X \xrightarrow[\text{encoding}]{\phi} Z \xrightarrow[\text{predictor}]{h} \widehat Y,
\]
Many predictors or encodings can map the same proxies to similarly good predictions, but that happens to the right of \(X\). The proxy gap lives to the left of \(X\): many latent mechanisms and measurement channels can generate the same proxies while implying different mechanistic answers. Identification concerns what can be inferred about \(S\) or \(q(M)\) from \(X\), not merely how well \(h\) predicts \(Y\) in-domain.

\section{Claim I: The Proxy Gap Creates Mechanism Equivalence Classes}\label{sec:proxy_gap}

Proxies constrain mechanisms only through a many-to-one measurement process (deterministic or stochastic). Without additional identifying structure, the evidence generally identifies an \emph{equivalence class} of mechanisms rather than a unique mechanism. Causal-quartet examples in \citet{mcgowan2024causal} show this in the context of causal inference.
For 19th-century epidemic disease, the day-to-day evidence available to observers---symptoms, odors, local air conditions, seasonality, crowding, sanitation, neighborhood mortality rates, and water/waste correlates---was often compatible with both miasmatic and germ-theoretic accounts. Miasmatic theories were empirically plausible because foul environments were genuinely predictive of disease, and sanitation often improved health; germ-theoretic accounts became compelling only when new evidence decoupled these proxies from specific microbial exposure and transmission mechanisms. With only the original proxy channel and no new measurement or identification restriction, observing more cases need not resolve the mechanistic ambiguity.

The argument begins with an intentionally ``easy'' setting.
Assume the latent state decomposes into \(B\) blocks \(S=(S_1,\dots,S_B)\).
A \emph{proxy block} \(X_b\) is the collection of observed measurements intended to capture (possibly noisily)
the corresponding latent block \(S_b\). Assume a modular measurement structure in which each
proxy block depends only on its corresponding latent block (e.g., multiple sensors targeting the same driver),
either deterministically \(X_b=f_b(S_b)\) or via a channel \(X_b\mid S_b\sim K_b(\cdot\mid S_b)\).
The propositions below show that even if this modular structure holds and measurement is invertible within
blocks, proxies alone still do not uniquely identify the latent mechanism without additional identifying
assumptions or discriminating regimes.
\begin{proposition}[Proxy identifiability]\label{thm:proxy_nogo}
Let $X=(X_1,\dots,X_B)$ be observed proxy blocks. Suppose an admissible deterministic representation consists of latent blocks $S=(S_1,\dots,S_B)$ and bimeasurable bijections $f_b:\mathcal S_b\to\mathcal X_b^{\mathrm{supp}}$ such that $X_b=f_b(S_b)$ a.s. for each $b=1,\dots,B$. If a second admissible representation of the same proxy blocks is given by $X_b=\tilde f_{\pi(b)}(\tilde S_{\pi(b)})$ a.s. for each $b$, where $\pi$ is a permutation of $\{1,\dots,B\}$ and each $\tilde f_{\pi(b)}$ is also a bimeasurable bijection onto $\mathcal X_b^{\mathrm{supp}}$, then there exist bimeasurable bijections satisfying, for each $b$,
\[
\begin{gathered}
g_b:=\tilde f_{\pi(b)}^{-1}\circ f_b,\qquad
\tilde S_{\pi(b)}=g_b(S_b)\ \text{a.s.},\\[-0.25ex]
\tilde f_{\pi(b)}=f_b\circ g_b^{-1}.
\end{gathered}
\]
\end{proposition}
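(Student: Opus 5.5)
The statement is essentially a change-of-variables bookkeeping result, so the plan is to \emph{define} $g_b$ exactly as in the display and then verify the three properties directly. The ingredients are the two hypotheses $X_b=f_b(S_b)$ a.s. and $X_b=\tilde f_{\pi(b)}(\tilde S_{\pi(b)})$ a.s. Both $f_b$ and $\tilde f_{\pi(b)}$ are bimeasurable bijections onto the common set $\mathcal X_b^{\mathrm{supp}}$, and bimeasurable means measurable with measurable inverse.

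First I would show that $g_b:=\tilde f_{\pi(b)}^{-1}\circ f_b$ is a bimeasurable bijection. Its domain is $\mathcal S_b$ and its codomain is the latent space $\tilde{\mathcal S}_{\pi(b)}$ of the second representation. Since $f_b$ maps $\mathcal S_b$ bijectively onto $\mathcal X_b^{\mathrm{supp}}$, and $\tilde f_{\pi(b)}^{-1}$ maps $\mathcal X_b^{\mathrm{supp}}$ bijectively onto $\tilde{\mathcal S}_{\pi(b)}$, the composition is a bijection. Its inverse is $f_b^{-1}\circ\tilde f_{\pi(b)}$. Compositions of measurable maps are measurable, so both $g_b$ and $g_b^{-1}$ are measurable.

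Second, I would establish the almost-sure identity $\tilde S_{\pi(b)}=g_b(S_b)$. Intersect the two probability-one events on which each representation holds. The intersection still has probability one, since there are only finitely many blocks and two events per block. On this event $X_b\in\mathcal X_b^{\mathrm{supp}}$, so both inverses may be applied. Applying $\tilde f_{\pi(b)}^{-1}$ to $\tilde f_{\pi(b)}(\tilde S_{\pi(b)})=X_b=f_b(S_b)$ gives $\tilde S_{\pi(b)}=\tilde f_{\pi(b)}^{-1}(f_b(S_b))=g_b(S_b)$.

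Third, the identity $\tilde f_{\pi(b)}=f_b\circ g_b^{-1}$ is pure algebra: $f_b\circ g_b^{-1}=f_b\circ f_b^{-1}\circ\tilde f_{\pi(b)}=\tilde f_{\pi(b)}$. This uses $f_b\circ f_b^{-1}=\mathrm{id}$ on $\mathcal X_b^{\mathrm{supp}}$, which is exactly where $\tilde f_{\pi(b)}$ takes its values.

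The only real obstacle is the measure-theoretic care in the second step. The a.s. qualifiers must be handled so that $X_b$ lands in $\mathcal X_b^{\mathrm{supp}}$, and the codomains must match, which is where the hypothesis "onto $\mathcal X_b^{\mathrm{supp}}$" for both maps is essential. If the supports were only contained in, rather than equal to, the images, I would restrict to the images and note that the inverses are defined there.

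Finally, I would remark on what the result shows: the permutation $\pi$ is unconstrained and each $g_b$ is an arbitrary bimeasurable reparametrization. The proxies therefore pin down the latent blocks only up to relabeling and blockwise invertible transformation. This equivalence class is what the surrounding section interprets as non-identification.
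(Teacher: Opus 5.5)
Your proposal is correct and follows the paper's proof essentially step for step: define $g_b=\tilde f_{\pi(b)}^{-1}\circ f_b$, note that it is a bimeasurable bijection with inverse $f_b^{-1}\circ\tilde f_{\pi(b)}$, apply $\tilde f_{\pi(b)}^{-1}$ to the two a.s. representations of $X_b$ to get $\tilde S_{\pi(b)}=g_b(S_b)$ a.s., and rearrange to obtain $\tilde f_{\pi(b)}=f_b\circ g_b^{-1}$. Your explicit intersection of the probability-one events handles the a.s. qualifiers slightly more carefully than the paper does, but the argument is the same.
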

In this deterministic invertible setting, the equivalence class induced by $P_X$ is exactly within-block reparameterizations plus, when labels are not intrinsic, block permutations. Thus only quantities invariant under those transformations (e.g., certain conditional independences or invariance relations) are identified from $P_X$ without further structure. This population-level ambiguity is not a finite-sample issue and does not disappear as parameter uncertainty shrinks; Proposition~\ref{thm:proxy_nogo_channel} shows it also survives arbitrary conditionally independent measurement noise.

\begin{samepage}
\begin{proposition}[Non-identification persists under noise]\label{thm:proxy_nogo_channel}
Let $S=(S_1,\dots,S_B)$ and let $K(x\mid s)=\prod_{b=1}^B K_b(x_b\mid s_b)$ be a conditionally independent proxy channel. For any blockwise bimeasurable maps $g_b$, define
\[
\tilde S_b=g_b(S_b),\qquad
\tilde K_b(\cdot\mid \tilde s_b):=K_b(\cdot\mid g_b^{-1}(\tilde s_b)).
\]
Then $(S,K)$ and $(\tilde S,\tilde K)$ induce the same distribution of $X$. Thus, blockwise reparameterization
non-identification survives arbitrary conditionally independent measurement noise.
\end{proposition}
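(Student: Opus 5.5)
The plan is a direct change-of-variables computation: write both proxy laws as mixtures of the channel over the latent law, then push the latent law forward through $g$. Implicitly, $g_b$ must be a bimeasurable bijection, so that $g_b^{-1}$ exists and $\tilde K_b$ is well defined. Set $g=(g_1,\dots,g_B)$, so $\tilde S=g(S)$ and $P_{\tilde S}=P_S\circ g^{-1}$ is the pushforward.

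First I would check that $\tilde K$ is a valid Markov kernel. For fixed $\tilde s$, $\tilde K_b(\cdot\mid\tilde s_b)$ is a probability measure, because it equals $K_b(\cdot\mid s_b)$ at the point $s_b=g_b^{-1}(\tilde s_b)$. For a fixed measurable set $A_b$, the map $\tilde s_b\mapsto K_b(A_b\mid g_b^{-1}(\tilde s_b))$ is a composition of measurable maps, since $g_b^{-1}$ is measurable by bimeasurability. The product $\tilde K=\prod_b\tilde K_b$ is therefore a kernel, and by construction $\tilde K(\cdot\mid g(s))=K(\cdot\mid s)$ for every $s$. This is the key pointwise identity.

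Second, for a measurable proxy event $A$, I would apply the abstract change-of-variables formula for pushforward measures:
\[
\tilde P_X(A)=\int \tilde K(A\mid \tilde s)\,P_{\tilde S}(d\tilde s)=\int \tilde K(A\mid g(s))\,P_S(ds)=\int K(A\mid s)\,P_S(ds)=P_X(A).
\]
The integrand is bounded and measurable, so the formula applies without integrability concerns. The identity needs to hold on product rectangles $A=\prod_b A_b$ and then extend to all measurable sets. Here it actually holds directly for arbitrary $A$, because the kernel identity holds for the full product measure, not just on rectangles. Even so, I would note that the two kernels agree as measures, which follows from agreement on the $\pi$-system of rectangles via the $\pi$–$\lambda$ theorem. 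Hence $(S,K)$ and $(\tilde S,\tilde K)$ induce the same distribution of $X$.

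The only step needing care is the measurability of $\tilde K_b$ in $\tilde s_b$, which is precisely where bimeasurability of $g_b$ is used. I would remark that if $g_b$ were merely measurable, one would instead define $\tilde K_b$ on the image $g_b(\mathcal S_b)$ using a measurable inverse. I would also note that conditional independence of the channel is not essential to the computation; it only ensures that $\tilde K$ keeps the same factorized, blockwise form, so the alternative mechanism remains in the stated model class. The concluding sentence of the statement then follows immediately, since the noise kernels $K_b$ were arbitrary.
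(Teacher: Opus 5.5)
Your proposal is correct and follows essentially the same route as the paper's proof: push $P_S$ forward through $g=(g_1,\dots,g_B)$, check that $\tilde K$ is a Markov kernel via measurability of $g_b^{-1}$, and combine the pointwise identity $\tilde K(\cdot\mid g(s))=K(\cdot\mid s)$ with the change-of-variables formula. Your observation that this identity holds directly as an equality of product measures is a correct minor refinement, since it means the paper's rectangle/monotone-class step is needed only for measurability of $\tilde K$; note, though, that your side remark about merely measurable $g_b$ still needs $g_b$ to be injective for $g_b^{-1}$ to make sense.
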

\end{samepage}
In short: richer proxies can improve \emph{estimation} of a chosen model, but they do not, by themselves, guarantee \emph{identification} of a mechanism.

\section{Claim II: High Dimensionality Aggravates the Equivalence-Class Problem}\label{sec:cod}

The proxy gap already creates mechanism equivalence classes. High dimensionality and modern ML practice aggravate this problem by making disagreement easier to hide on thin support and harder to diagnose in realistic workflows.  High-dimensional proxy vectors rarely fill their ambient space; they concentrate on a thin effective support shaped by selection, measurement, and correlation structure. At the same time, the curse of dimensionality implies that learning general functions (not to mention mechanism-relevant counterfactual behavior) requires sample sizes that explode with dimension unless strong structure is assumed~\cite{Bellman1957DynamicProgramming,Stone1982OptimalRates}. %

Two geometric consequences are especially relevant. First, in high dimensions, distances can concentrate: ``nearest'' and ``farthest'' neighbors can become nearly indistinguishable under broad conditions~\citep{beyer1999,aggarwal2001}. When neighborhoods lose meaning, any method that depends on local pooling (nearest neighbors, kernels, clustering, or representation-induced neighborhood smoothing) becomes sensitive to modeling choices that are not themselves identified by the data. Second, even coarse partitioning of a $d$-dimensional proxy space yields exponentially many regions as $d$ grows. Unless sample sizes grow commensurately, most regions contain too little data to discriminate mechanisms that agree on the observed support but differ elsewhere. Operationally, this is how a large mechanism equivalence class shows up in finite samples: many incompatible mechanisms are observationally indistinguishable on the thin support where data live, yet can diverge sharply off-support.

In sum, Sections~\ref{sec:proxy_gap}--\ref{sec:cod} argue that in proxy-rich, high-dimensional regimes, mechanistic claims should be treated as claims about a \emph{set} of observationally compatible mechanisms unless accompanied by identifying structure or mechanism-discriminating evaluation.

\section{Claim III: An Additional LLM-Specific Hazard---Narrative Collapse as False Resolution}\label{sec:narrative}

Sections~\ref{sec:proxy_gap}--\ref{sec:cod} argue that, without identifying structure, proxy evidence often supports a set of observationally compatible mechanisms rather than a unique mechanism. LLM-centered workflows create a distinctive hazard when they represent that set as a resolved explanatory account. This paper calls this \emph{narrative collapse}: a many-to-one mapping from a mechanism compatibility set to an explanatory output---a single story, a ranked list of factors, or a polished synthesis---that is naturally read as more identified than the evidence warrants. Humans also do this---scientific communities routinely converge on ``the'' story under ambiguity, but LLMs industrialize the process by making single-story explanations fast and easy to produce at scale. 
Narrative collapse is not the claim that every LLM response names exactly one cause. Many systems, especially when prompted, enumerate several factors or caveats. A factor list avoids collapse only if it preserves the relevant scientific multiplicity: which mechanisms are mutually incompatible, which assumptions make each admissible, which answers to \(q(M)\) are stable across \(\mathcal M(P_X)\), and what evidence would distinguish them. %

This use of ``collapse'' is intentionally different from two nearby concerns. \emph{Model collapse} refers to degenerative dynamics that can arise when generative models are trained recursively on model-generated content, leading to distributional narrowing and loss of tails~\citep{shumailov2024modelcollapse}. \emph{Algorithmic monoculture} refers to population-level harms when many decision-makers converge on the same algorithm, reducing system robustness even when the algorithm is locally better for each agent~\citep{kleinberg2021monoculture}. Both phenomena can amplify narrative collapse, but neither is required for it: narrative collapse can occur in a single interaction with a single model on a fixed dataset, whenever the evidence does not identify a unique mechanism and the interface reports a resolved explanation without representing what remains compatible.

How often this occurs in present systems is an empirical question, and the rate will vary by model, prompt, interface, task, and user population. The structural claim is conditional: whenever a workflow asks for, or rewards, a resolved mechanistic answer while \(\mathcal M(P_X)\) contains mechanisms with different values of \(q(M)\), no such answer can be uniformly warranted. Better interfaces can reduce this risk by surfacing the compatible mechanism set, tagging assumptions, and proposing discriminating tests. That is precisely the design recommendation rather than an exception to the argument. Recent critiques of uncertainty quantification for LLM agents emphasize that the standard aleatoric/epistemic dichotomy becomes strained in open, interactive settings~\citep{kirchhof2025uq}. The point is complementary: in proxy-driven science, the dominant ambiguity is often \emph{structural identification uncertainty}, and a resolved narrative is a lossy representation of that uncertainty.

To make the issue precise without over-formalizing, fix an observed proxy distribution \(P_X\) and let
\(D=(X^{(1)},\dots,X^{(n)}) \sim P_X^n\) denote the observed dataset. Let \(\mathcal M(P_X)\) denote the set of mechanisms
(latent data-generating processes for \(S\) together with measurement processes \(K\)) that are observationally compatible with \(P_X\) under the setup of Sections~\ref{sec:setup}--\ref{sec:proxy_gap}. Let \(q:\mathcal M \to \mathcal A\) be a mechanistic query mapping a mechanism to an answer space \(\mathcal A\) (e.g., ``Which variables are causally upstream?'' ``What intervention increases \(Y\)?'' ``Which invariance should hold out of domain?''). Everything below holds verbatim if \(D\) includes outcomes \(Y\) or other observables: simply replace \(P_X\) by the joint observational law of whatever variables the workflow conditions on. Proposition~\ref{prop:narrative} records the basic obstruction: if two proxy-compatible mechanisms imply different answers to \(q\), then no rule that sees only the proxies can always give the right single answer.

\begin{proposition}[Narrative collapse as minimax ambiguity]\label{prop:narrative}
If there exist \(M_1,M_2\in \mathcal M(P_X)\) such that \(q(M_1)\neq q(M_2)\), then no single-valued explanation rule
\(\widehat a=\widehat a(D)\) can be uniformly correct over \(\mathcal M(P_X)\) for the query \(q\).
In particular, under 0--1 loss \(\ell(\widehat a,q(M))=\mathbf 1\{\widehat a\neq q(M)\}\), the worst-case risk satisfies
\[
\inf_{\widehat a}\ \sup_{M\in\mathcal M(P_X)} \ \mathbb E\big[\ell(\widehat a(D),q(M))\big] \ \ge \ \tfrac{1}{2},
\]
where the expectation is over \(D\sim P_X^n\) (equivalently, over any procedure that only has access to the proxies).
\end{proposition}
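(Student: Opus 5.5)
The plan is a standard two-point (Le Cam style) argument. The key observation is that the data distribution does not depend on which element of \(\mathcal M(P_X)\) is true. By definition, every \(M\in\mathcal M(P_X)\) induces the same proxy law \(P_X\), so \(D\sim P_X^n\) under both \(M_1\) and \(M_2\). Consequently, for any (possibly randomized) rule \(\widehat a\), the law of \(\widehat a(D)\) is one and the same distribution \(Q\) on \(\mathcal A\) under \(M_1\) and under \(M_2\). If the rule uses external randomization \(U\) independent of \(D\), the same holds because \(U\) does not depend on \(M\) either. The same remark covers the parenthetical ``any procedure that only has access to the proxies.''

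Next I restrict the supremum to the two points, which can only lower it:
\[
\sup_{M\in\mathcal M(P_X)} \mathbb E\,\ell(\widehat a(D),q(M)) \ \ge\ \max_{j=1,2} Q\big(\widehat a\neq q(M_j)\big)\ \ge\ \tfrac12\sum_{j=1}^2 Q\big(\widehat a\neq q(M_j)\big).
\]
Write \(a_j=q(M_j)\), so that \(a_1\neq a_2\) by hypothesis. The events \(\{\widehat a=a_1\}\) and \(\{\widehat a=a_2\}\) are disjoint, which gives \(Q(\widehat a=a_1)+Q(\widehat a=a_2)\le 1\). Hence \(\sum_j Q(\widehat a\neq a_j)=2-\sum_j Q(\widehat a=a_j)\ge 1\). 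This shows the worst-case risk of every rule is at least \(\tfrac12\). Taking the infimum over \(\widehat a\) preserves the bound, which gives the displayed inequality. The bound does not depend on \(n\).

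The first, qualitative claim, that no rule is uniformly correct, follows at once. Uniform correctness would require \(\widehat a(D)=a_1\) and \(\widehat a(D)=a_2\) almost surely under the same law of \(D\). That is impossible because \(a_1\neq a_2\). Equivalently, uniform correctness would mean zero worst-case risk, contradicting the bound just proved.

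The only delicate step is the first one: formalizing that the observable law is literally identical across \(\mathcal M(P_X)\). This requires the mechanism to influence the rule only through \(D\). I would state this explicitly as the meaning of ``only has access to the proxies.'' I would also note that measurability of \(\widehat a\) into \(\mathcal A\) is needed so that \(Q\) is well-defined. The rest is the elementary disjointness inequality.
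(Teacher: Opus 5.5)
Your proposal is correct and follows the paper's proof essentially step for step. Both use that $M_1,M_2$ induce the same law $P_X^n$ on $D$, get $p_1+p_2\le 1$ from disjointness of $\{\widehat a=a_1\}$ and $\{\widehat a=a_2\}$, and bound the larger of the two risks by their average $1-(p_1+p_2)/2\ge \tfrac12$; your remarks on randomized rules and on deriving the qualitative claim directly are minor refinements of the same argument.
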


Proposition~\ref{prop:narrative} is deliberately illustrative: when the evidence admits multiple incompatible answers to a mechanistic query, any \emph{single} reported answer must be wrong for at least one observationally compatible mechanism. The scientifically responsible response is therefore not ``be more eloquent''; it is to change what is reported. %
In interactive use, the risk is amplified when follow-up questions move from description to ``why'' and ``what-if'' claims. Those questions can require extrapolating beyond what the proxies identify. Conversational norms can also reward \emph{consistency}: once a workflow commits to a mechanism in an early turn, later turns may rationalize and elaborate that commitment rather than reopening the mechanism set. This coherence pressure is useful for ordinary assistance, but under underidentification it can turn a set-valued scientific state of knowledge into an increasingly entrenched point narrative.
Narrative collapse is therefore the mechanism-level analogue of overconfident extrapolation: it converts a set-valued scientific state of knowledge into a resolved explanatory object. Operationalizing the concept requires benchmarks that specify (i) proxy data, (ii) a mechanistic query \(q\), (iii) the set of observationally compatible answers under stated assumptions, and (iv) at least one discriminating test that would shrink that set. The unit of evaluation should not be whether a response names one cause or many factors, but whether it preserves the identified set: compatible-answer coverage, separation of mutually incompatible mechanisms, assumption tagging, discriminating-test quality, and identified-set calibration---whether the system makes point claims only when the benchmark enters an identified regime.

\section{Where This Sits Relative to Aleatoric and Epistemic Uncertainty}\label{sec:uq}
Uncertainty quantification in ML is often organized around a dichotomy: \emph{aleatoric} uncertainty, attributed to irreducible randomness in outcomes, and \emph{epistemic} uncertainty, attributed to an agent's lack of knowledge about parameters, models, or hypotheses \citep{kendall2017uncertainties,huellermeier2021aleatoricepistemic}. This split is useful for prediction problems, but it does not cleanly capture the central phenomenon in Sections~3--5. Even with unlimited data about proxies \(X\), mechanism-level claims about \(S\) can remain underdetermined unless one adds explicit identifying structure. This paper isolates a particularly important form of epistemic uncertainty: uncertainty induced by non-identification from proxies.

To make this precise (with a more formal presentation in Appendix~\ref{app:compare}), fix an observational distribution \(P_X\). Consider the set of mechanism stories that could have produced it:
$
\mathcal{M}(P_X) \;=\; \bigl\{ (P_S,\text{measurement map/channel}) \,:\, (S \to X)\ \text{induces}\ P_X \bigr\}.
$
Sections~3--4 argue that \(\mathcal{M}(P_X)\) is typically large in modern regimes: proxies are many-to-one summaries of latent drivers, and high dimensionality makes disagreement outside the observed support easier to hide and harder to diagnose. This is the \emph{structural identification uncertainty} emphasized here. In the context of causal inference,~\citet{gelman2024causalquartets} show that even when a scalar average causal effect is held fixed, very different heterogeneous effect patterns can remain compatible with that same summary, for example. In contrast, aleatoric uncertainty concerns variability of outcomes \emph{conditional on a mechanism}, while common uses of epistemic uncertainty quantify uncertainty relative to a chosen model or hypothesis class. The proxy gap matters because it questions whether that class and its measurement geometry identify the mechanistic query at all. 

This perspective clarifies how these claims relate to (and differ from) the Rashomon effect and underspecification. Rashomon-style multiplicity concerns many predictors achieving similar risk \citep{fisher2019allmodels,xin2022rashomon,venkateswaran2024robustly}; underspecification emphasizes that many models can fit training objectives yet behave differently under deployment \citep{damour2022}. Structural identification uncertainty is more basic: from \(X\) alone, the mapping back to mechanism \(S\) is not pinned down without additional assumptions on measurement and structure. Rashomon and underspecification are important amplifiers in finite samples and deployment, but the proxy gap can generate mechanism equivalence classes even before optimization and architecture enter the picture.

These distinctions matter for LLM-centered scientific workflows. Many uncertainty quantification (UQ) tools quantify dispersion around a well-defined predictive target. But when the user asks a mechanism question—``what explains this pattern?''—the dominant uncertainty may be which members of \(\mathcal{M}(P_X)\) remain plausible. In that regime, a single calibrated interval or scalar ``epistemic uncertainty'' can be actively misleading: it suggests that what is missing is merely more data or better parameter estimates, when the binding constraint is lack of identification structure. This issue is sharper in interactive settings, where queries expand into counterfactuals and the system is forced to pick a story off-support; recent critiques argue that the aleatoric/epistemic split becomes strained for LLM agents in open interaction \citep{kirchhof2025uq}. The point here is complementary: even perfect predictive calibration can leave mechanistic claims non-identified.

\section{An Example with Pea Plants}\label{sec:example}
Mendel crossed a pure-breeding wrinkled-green line with a pure-breeding round-yellow line. The first generation (F$_1$) was uniformly round-yellow, and those F$_1$ plants were crossed again to produce the F$_2$ generation analyzed here. The observed labels are visible traits---round-yellow (\texttt{RY}), round-green (\texttt{RG}), wrinkled-yellow (\texttt{WY}), and wrinkled-green (\texttt{WG})---not the hereditary state itself. The hidden mechanism is the rule by which parental hereditary factors produce those visible categories. The example is intentionally favorable: Mendel worked with unusually discrete, selected, experimentally tractable traits; many genetic traits involve many loci, interactions, pleiotropy, and environmental dependence \citep{curtis2023mendel,bapty2023mendel,mackay2024pleiotropy}. This section asks what happens when that structure is removed from the evidence regime.

The simulation is designed to make three points concrete. First, under a phenotype-only observational channel, distinct mechanisms can be observationally equivalent on the support of the data (Claim~I). Second, adding experimental structure (here: labeling the cross type and including additional crosses) shrinks this mechanism equivalence class (Claim~II). Third, modern ``black-box discoverers'' can achieve similar predictive performance while still yielding unstable mechanistic counterfactuals when the design metadata is removed (Claim~III).
The simulation compares two regimes. Regime~A gives only pooled F$_2$ phenotype counts: ``here are the offspring categories.'' Regime~B also records which controlled cross produced each observation (\texttt{F2\_self}, \texttt{testcross}, \texttt{monoA}, or \texttt{monoB}) and includes additional labeled crosses. This cross label is the identifying structure: phenotype counts alone record what the offspring looked like, but the mating protocol tells us whether those counts came from an F$_2$ self-cross, a testcross, or a single-locus cross, which are different tests of the inheritance rule.

The example compares two stories that look identical if only the pooled F$_2$ counts are inspected. The Mendelian story says those counts come from a real inheritance rule: two traits, dominance, and independent assortment, which produce the familiar \(9{:}3{:}3{:}1\) pattern. The deliberately simple alternative ignores that inheritance rule and instead treats shape and color as two separate weighted coin flips, each producing the dominant trait with probability \(3/4\). This alternative is constructed to match the pooled F$_2$ counts exactly. But once the data record which cross produced each offspring, the two stories no longer agree. In \emph{Regime A} (phenotype-only), the simulation generates \(n_{\text{F2}}=6000\) offspring from an F$_2$-self distribution (parents \texttt{RY} \(\times\) \texttt{RY}), and the analysis evaluates fit only at the level of the pooled phenotype distribution. In \emph{Regime B} (design), the simulation generates additional labeled crosses, each with \(n_{\text{per cross}}=1200\) offspring: \texttt{F2\_self}, \texttt{testcross}, \texttt{monoA}, and \texttt{monoB}. The key identifying structure is the cross-type label (e.g., testcross vs mono cross), which encodes experimental lineage/design information that is not recoverable from phenotype proxies alone but radically changes what the same proxy observations imply.
The analysis fits a two-hidden-layer Multilayer Perceptron (MLP) classifier with architecture \((32,32)\) to predict offspring phenotype from parent phenotypes. In the figure labels, ``no design'' is Regime~A: the MLP trains without cross labels and sees only F$_2$ self-cross data. ``With design'' is Regime~B: the MLP trains with the cross label and sees the queried testcross support. A seed is one independently trained MLP run with a different random-state value controlling random initialization and training randomness; the train/test split is fixed across seeds. The ``all'' bars use all 60 such runs in the relevant regime; the ``near-opt'' bars restrict to seeds whose held-out test log loss is within \(\varepsilon=0.01\) of the best seed for that same regime.

\begin{figure}[h!]
  \centering
  \includegraphics[width=.8\linewidth]{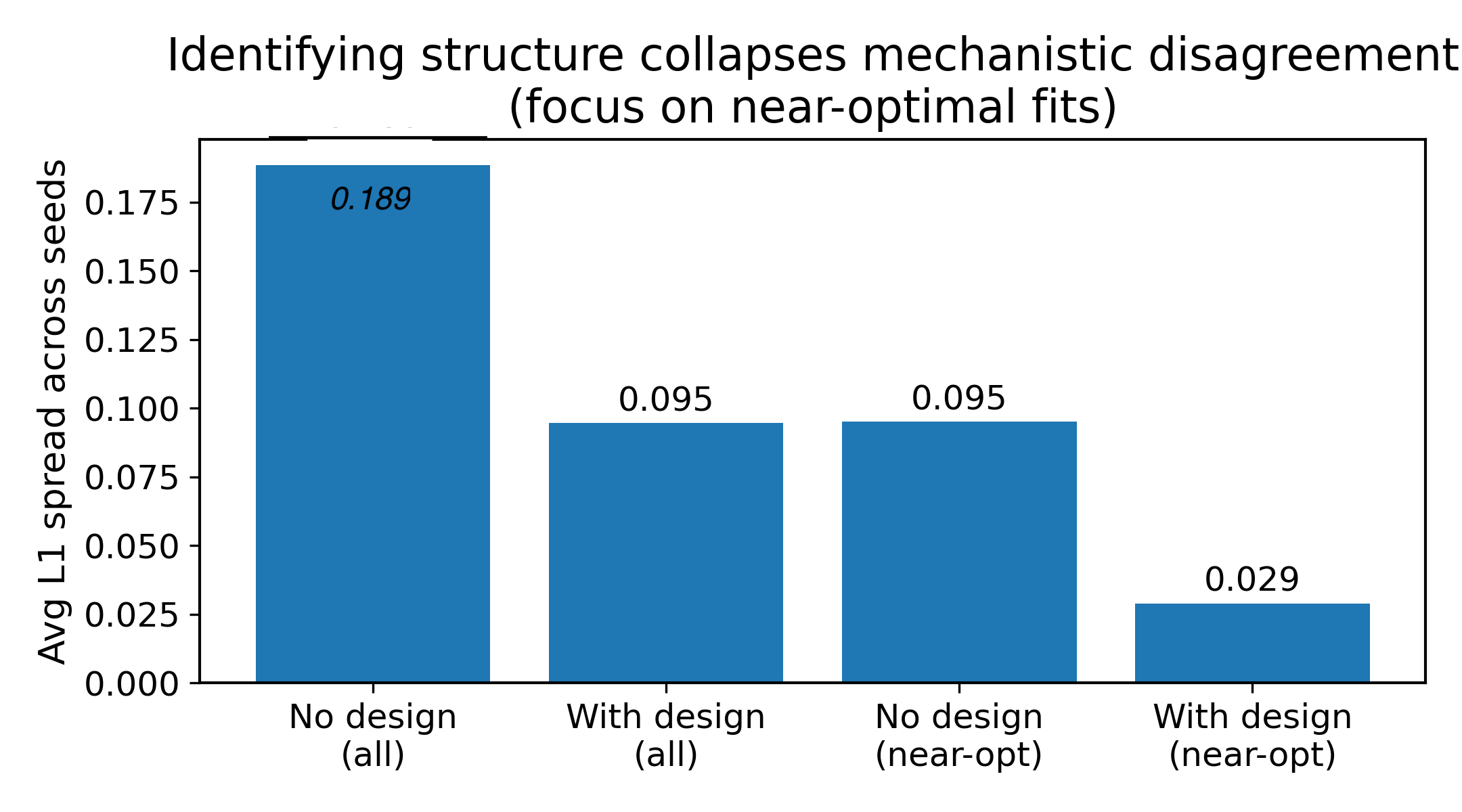}
  \caption{\textbf{Identifying structure collapses mechanistic disagreement.}
  Mean \(\ell_1\) spread of counterfactual predicted probability vectors across MLP runs. ``No design'' is Regime~A (phenotype-only F$_2$ self-cross data); ``with design'' is Regime~B (cross labels plus additional labeled crosses, including testcross support). ``All'' uses all 60 random-state runs; ``near-opt'' restricts to runs within \(\varepsilon=0.01\) of the best held-out test log loss in the same regime. The spread collapses in the design regime, which adds testcross support and labels that make the counterfactual cross prediction well defined.}
  \label{fig:peas_spread}
\end{figure}

The prediction task asks: for the same observed parent phenotype pair, \texttt{RY}\(\times\)\texttt{WG}, what offspring probabilities should the model assign to \texttt{RG}, \texttt{RY}, \texttt{WG}, and \texttt{WY}? Without cross labels, this question is underdetermined because the same parent phenotypes can arise from different experimental lineages; with labels, it becomes a specific testcross prediction (\texttt{cross=testcross}). The mean \(\ell_1\) spread summarizes how much the fitted models disagree:
\(\frac{1}{N_{\text{seeds}}}\sum_{r=1}^{N_{\text{seeds}}} \| \hat p_r - \bar p \|_1\), where \(\bar p := \frac{1}{N_{\text{seeds}}}\sum_{r=1}^{N_{\text{seeds}}}\hat p_r\),
computed either across all seeds or only near-optimal seeds. Zero would mean every fitted network gives the same four probabilities. Figure~\ref{fig:peas_spread} shows the point: in Regime~A, equally good predictors disagree more about the counterfactual cross; in Regime~B, the added cross labels and testcross support make the answer substantially more stable. The near-optimal spread falls from \(0.095\) without design to \(0.029\) with design, while the all-seed spread falls from \(0.189\) to \(0.095\). Additional concrete probabilities and plots are in Appendix~\ref{app:pea}; a continuous low-dimensional simulation is in Appendix~\ref{app:sim}.

\section{Call to Action: Mechanistic ML}\label{sec:cta}

The core prescription is to treat \emph{identification}, not expressiveness, as the binding constraint in proxy-rich scientific discovery. In this paper, that means identifying the answer to a stated mechanistic query from a compatibility set of mechanisms, not necessarily estimating a treatment effect from a randomized or quasi-random treatment assignment. First-principles restrictions are archetypal identifying structure: conservation laws, symmetries, monotonicity, geometry, and intervention semantics all shrink the compatible mechanism set and make mechanistic claims auditable. Identification is a common concept in economics and causal inference, among other areas.  For example, unsupervised disentanglement is provably non-identifiable without inductive biases and/or supervision \citep{pmlr-v97-locatello19a}, while identifiability can be recovered under additional observed variables, grouping structure, or restricted ICA-style assumptions \citep{pmlr-v108-khemakhem20a,pmlr-v235-morioka24a}. More broadly, the logic of no-free-lunch and Kolmogorov complexity arguments makes the same meta-point: without inductive bias aligned with the world, learning cannot select the right explanation class \citep{goldblum2024nfl}. In the presence of proxies, mechanistic ML should adopt the same norms in pursuit of scientific discovery. The central question is not ``how flexible is the learner?'' but ``under what assumptions and what data design is the mechanistic query $q(M)$ identified?'' Appendix~\ref{app:ident_structure_examples} gives six examples.%
Appendix~\ref{app:ident_structure_def} defines identifying structure with mathematical formalism, while Appendix~\ref{app:min_ident} gives an example of how this conceptualization can be used to identify minimal identification assumptions for a specific class of models.  
This is also why several successful scientific-ML systems should be viewed as positive cases rather than counterexamples. Equation-discovery and inverse-problem methods such as symbolic regression, SINDy, and PINNs succeed precisely when they combine expressive function classes with explicit structural restrictions on the mechanism, the observation process, or the experimental regime. Their success comes not from black-box flexibility alone, but from the identifying structure that makes mechanistic queries auditable.
In practice, identifying structure yields three auditable manifestations. First, a workflow can \emph{declare assumptions} that restrict both the mechanism and the measurement channel, so that readers can see which equivalence classes have been ruled in and ruled out. Second, it can \emph{collect and evaluate on discriminating data}: interventions, invariances across environments, and derivative constraints that are implied by the mechanism but not by proxy correlations alone. This is where design enters: the call is not just for \emph{more} data, but for \emph{new} data that breaks observational equivalence by changing environments, manipulating inputs, or measuring closer to the latent drivers. Without a notion of identifying structure, there is no guidance about \emph{how} to collect data.  Third, when assumptions and available data still leave ambiguity, a workflow can \emph{report multiplicity}: characterize the remaining identified set of mechanistic answers (or mechanisms) rather than collapsing it into a single narrative.

Mendel's case worked because favorable biology and discriminating design lined up. On the mechanism side, his hypothesis class was sharply constrained: particulate inheritance with segregation and independent assortment. On the observation side, he used selected discrete phenotypes whose relationship to latent hereditary factors was stable enough to generate sharp, falsifiable implications. These restrictions were paired with controlled crosses, replication, and targeted contrasts, such as F$_2$ ratios and testcrosses, that would look different under alternative mechanisms \citep{mendel1866,curtis2023mendel}. The lesson is not that design alone made genetics simple, but that mechanistic discovery requires an evidence regime in which relevant alternatives imply different observable patterns. Miasmatic theories of epidemic disease, by contrast, were supported by rich but indirect proxies: odor, dampness, crowding, poor drainage, sewage, low elevation, seasonality, and neighborhood mortality. These proxies were genuinely predictive, and sanitary reforms motivated by miasmatic reasoning often improved health, which made the theory empirically plausible. But the same proxy patterns were also compatible with germ-theoretic accounts, because waste, water, housing, and occupational environments also structured microbial exposure and host susceptibility. The eventual shift toward specific pathogen explanations required evidence that decoupled these mechanisms: water-source and exposure contrasts in cholera, laboratory isolation and culture, experimental inoculation, and later microbiological assays \citep{snow1855,pasteur1861,koch1884}. The lesson for mechanistic ML is therefore not ``collect big data,'' but ``collect the \emph{right} data to shrink $\mathcal M(P_X)$.''

This paper therefore proposes the following auditable standards as community expectations for any workflow, especially LLM-centered workflows, that present outputs as mechanistic. First, \textbf{Mechanism cards}: authors should combine every mechanistic explanation with a short structured appendix that (i) states the \emph{mechanism-side} assumptions (what is invariant under interventions, which functional forms/constraints are being imposed, what is excluded), (ii) states the \emph{observation-side} assumptions (how proxies relate to latent drivers; what is assumed stable vs.\ artifactual in measurement), and (iii) states what the assumptions \emph{actually identify}: which aspects of the mechanism are pinned down, and which remain an equivalence class (Appendix~\ref{app:ident_structure_def}). A mechanism card is incomplete without at least one explicit falsifier: an additional environment, intervention, new measurement channel, or invariance/derivative test that would \emph{rule out} the proposed mechanism. Second, \textbf{Multiplicity statements}: when identification is unavailable, the system must represent the identified set rather than output a single story. Concretely, it should report a set of distinct compatible mechanisms or a set of answers to the query $q$ (e.g., a sign-robust derivative range, a range of counterfactual effects), and it should label which qualitative conclusions are stable across that set. Third, \textbf{Mechanism-discriminating evaluation}: mechanistic claims must be evaluated on targets that directly probe identifying structure like interventions, invariances across environments, proxy-invariance checks tied to measurement, and derivative constraints.

If a paper or product claims a black box has ``discovered a mechanism,'' reviewers and readers should ask: what is assumed about the mechanism, what is assumed about the measurement channel, what data or contrasts actually discriminate among alternatives, and what else remains compatible? If the answer is unclear, the authors should phrase the results as bounded evidence: a coherent hypothesis, simulation, or partially identified set consistent with the observed proxies on their realized support, together with a clear statement of what the evidence rules out and what remains compatible.
In these settings, human experts remain central because the key tasks are precisely to state defensible assumptions, design discriminating experiments, and decide which claims are prediction-only versus mechanistic.

\section{Alternative Views}\label{sec:alt}
\paragraph{Alternative View 1: ``This is just epistemic uncertainty; ensembles and calibration solve it.''}
Ensembles and calibration are valuable, but they primarily address uncertainty about predictions under a fixed
observational target distribution. They do not, by themselves, identify mechanisms when proxies admit multiple
observationally compatible causal stories (Sections~\ref{sec:proxy_gap}--\ref{sec:cod}). The relevant ambiguity here
is \emph{structural identification uncertainty}: even with unlimited proxy data from the same measurement channel,
mechanistic queries \(q(M)\) can differ across \(M\in\mathcal M(P_X)\). Diversity helps precisely when it \emph{surfaces}
disagreement, but that requires an interface that does not collapse into one story.

\paragraph{Alternative View 2: ``Scaling and better data will select the true mechanism.''}
This objection is partly right: better data help \emph{when they add mechanistically informative variation}---new environments,
interventions, or measurement channels closer to the latent drivers---because these can shrink \(\mathcal M(P_X)\).
However, scale alone can also improve in-domain fit while leaving \(\mathcal M(P_X)\) large. In this framework, this
objection is best understood as a claim about the observational regime \(r\): if scaling implicitly changes \(r\)
(e.g., by adding environments or new measurement channels), then it is adding identifying structure. The request is
that such structure be made explicit and evaluated via discriminating tests, rather than assumed. Corollary~\ref{cor:scale_fixed_regime} makes this precise: if two observationally compatible mechanisms disagree on \(q\), no increase in sample size within the same observational regime can point-identify \(q\).
\paragraph{Alternative View 3: ``Identification is too strong a standard; useful science often proceeds with partial evidence.''}
This objection is right, and it is part of the position here rather than an exception to it. Many scientific and policy tasks do not require point-identifying a general mechanism. Partial identification, triangulation, process tracing, negative controls, and falsification tests can rule out candidate explanations, bound plausible effects, shift beliefs, guide data collection, and justify provisional action under uncertainty. The standard advocated here is narrower: when a workflow claims to have discovered a generalizable, intervention-stable mechanism, it should either supply the identifying structure that makes that claim answerable or report the remaining multiplicity. Evidence short of identification is valuable precisely when it is labeled as such. %

\paragraph{Alternative View 4: ``Narrative collapse is a UI issue, not a scientific issue.''}
The point is well taken: narrative collapse is partly an interface issue. When the interface encourages single-story explanations in underidentified settings, it invites
overclaiming. Related phenomena like model collapse and algorithmic monoculture can amplify the problem in ecosystems,
but narrative collapse can occur even in a single interaction on a fixed dataset. The appropriate response is not only
better UX; it is reporting and evaluation norms that force mechanistic claims to be conditional on discriminating tests
and to represent multiplicity when identification is unavailable.

\paragraph{Alternative View 5: ``Occam/simplicity will pick the right mechanism among observationally equivalent ones.''}
Simplicity biases (explicit or implicit) can be effective inductive biases in practice, and they often guide
scientific progress. The point here is that simplicity is itself an \emph{identifying assumption}: it selects one member of
a proxy-compatible equivalence class. When a workflow reports a single mechanism on the basis of simplicity, it should
declare (i) the simplicity criterion being used (description length, sparsity, functional form, etc.), (ii) which
alternatives remain compatible absent that criterion, and (iii) falsifiers or regime shifts that would discriminate
between the selected mechanism and plausible competitors. %

\paragraph{Alternative View 6: ``Mechanisms are only defined up to query-specific equivalence; reporting one representative is fine.''}
Mechanism claims are indeed query-specific: if two mechanistic stories imply the same answer to the intended
interventional query \(q\), then they are equivalent \emph{for that query}. The concern is precisely that LLM-centered
workflows tend to output a rich narrative that contains many \emph{additional} mechanistic commitments not warranted by
the evidence (or by the intended query), and to present those commitments as uniquely supported. A safer norm is to
separate (i) query-stable conclusions from (ii) narrative embellishments that vary across \(M\in\mathcal M(P_X)\).

\paragraph{When ML/LLM-centered workflows \emph{can} genuinely support discovery.}
This paper argues against treating fluent narratives or predictive success as evidence of mechanism discovery absent identifying structure, not against complex models. AlphaFold2, for example, pairs model complexity with a narrow target query, exact sequence inputs, evolutionary signal from alignments/templates, experimentally determined training structures, geometry-aware inductive biases, and blind evaluation against withheld structures \citep{jumper2021alphafold,nobel2024chemistry}. LLM-centered workflows can be constructive when they generate candidate mechanisms with explicit identifying assumptions, propose discriminating experiments, measurements, or invariance tests that shrink \(\mathcal M(P_X)\), and summarize evidence as identified sets rather than single narratives when identification is unavailable. %
Relatedly, recent work on open-ended autonomous discovery proposes steering hypothesis search by Bayesian surprise~\citep{agarwal2025autodiscovery}; the concern here is orthogonal: even principled exploration must either represent multiplicity or introduce discriminating structure when the same observational channel admits multiple incompatible mechanisms.

\section{Conclusion}
Without explicit identifying structure, it is not possible to tell whether a fluent model has learned a mechanism or merely a representation. The position here would be weakened by a demonstrated class of proxy-rich observational regimes in which
(i) mechanistic claims can be reliably distinguished, tightly bounded, or otherwise made robust using only declared observational structure,
and (ii) LLM-centered workflows reliably surface those limits, bounds, and falsifiers rather than collapsing
to a single narrative. High-performing predictive models can still be scientifically valuable for emulation, screening, design, measurement, and hypothesis generation even when they are not mechanistically faithful. The requirement is claim-evidence alignment: predictive or descriptive success should not be automatically elevated into mechanism discovery. Absent such evidence, the position is twofold: build models that predict well, and require mechanistic claims to be stated in a form that supports falsification, sensitivity analysis, and explicit accounting of multiplicity. Otherwise, the field may well end up ``discovering'' the next miasma.  

\section*{Acknowledgements}
Support for this research came from the Eunice Kennedy Shriver National Institute of Child Health and Human Development under Award Number R01HD107015 and Award Number R21HD119931, as well as from Grant N000142512270 from the Office of Naval Research.  Thanks to Izabel Aguiar, Arun Chandrasekhar, Jishnu Das, Avi Feller, Markus Goldstein, Rachel Heath, Jason Kerwin, Jeff Lockhart, Bodhisattwa Majumder, Harsh Parikh, Karl Rohe, Stephen Salerno, Elizabeth Stewart, and the anonymous referees for constructive comments on earlier versions of the draft.  All errors and shortcomings remain mine despite their guidance.  
\bibliographystyle{icml2026}
\bibliography{position_paper_arxiv_refs}

\newpage
\appendix
\onecolumn

\section{Summary Table of Running Examples}
\label{app:table}
Table~\ref{tab:running_examples} gives a summary table of the two examples used repeatedly throughout the paper.
\begin{table}[h!]
\centering
\small
\begin{tabular}{p{2.2cm} p{4.2cm} p{4.2cm} p{4.2cm}}
\toprule
\textbf{Case} & \textbf{Observed proxies $X$} & \textbf{Latent factors / mechanism $S$} & \textbf{Identifying structure (what shrinks $\mathcal M(P_X)$)} \\
\midrule
Mendel (peas) &
Discrete phenotypes (e.g., seed color/shape, flower color) across controlled crosses &
Genotypes/alleles; segregation and (approx.) independent assortment; genotype$\to$phenotype mapping &
Designed experiments (controlled mating, factorial crosses), targeted contrasts (F$_2$ ratios, test-crosses), and low-dimensional mechanistic parameterization that makes alternatives falsifiable \\
\addlinespace
Miasma vs germ-theoretic causation &
Environmental correlates: odor/``bad air'', visible fog, proximity to swamps/sewers, neighborhood incidence, weather/wind; coarse spatial-temporal aggregates &
Specific pathogen presence and transmission route (e.g., waterborne contamination), contact networks/exposure pathways; heterogeneous susceptibility &
Mechanism-discriminating measurements and contrasts: water-source comparisons, exposure histories, quasi-experimental exposure changes, microbiological assays, and environment shifts that decouple proxies from causes \\
\bottomrule
\end{tabular}
\caption{Running examples: proxies, latent mechanisms, and the extra structure required to identify mechanisms rather than merely fit observations.}
\label{tab:running_examples}
\end{table}

\section{Proofs of Propositions~\ref{thm:proxy_nogo}--\ref{prop:narrative}}
\label{app:proofs}

\noindent\textbf{Proof of Proposition~\ref{thm:proxy_nogo} (Proxy identifiability up to blockwise reparameterization).}
Fix two admissible deterministic representations of the same proxy blocks:
\[
X_b=f_b(S_b)\qquad\text{a.s.}
\qquad\text{and}\qquad
X_b=\tilde f_{\pi(b)}(\tilde S_{\pi(b)})\qquad\text{a.s.}
\]
for each $b=1,\dots,B$, where $\pi$ is a permutation of $\{1,\dots,B\}$ and each $f_b$ and $\tilde f_{\pi(b)}$ is a
bimeasurable bijection onto $\mathcal X_b^{\mathrm{supp}}$.

For each block $b$, define
\[
g_b:=\tilde f_{\pi(b)}^{-1}\circ f_b.
\]
Because $f_b$ and $\tilde f_{\pi(b)}$ are bimeasurable bijections, $g_b$ is a bimeasurable bijection from $\mathcal S_b$ onto
$\tilde{\mathcal S}_{\pi(b)}$, with inverse
\[
g_b^{-1}=f_b^{-1}\circ \tilde f_{\pi(b)}.
\]
Applying $\tilde f_{\pi(b)}^{-1}$ to the two representations of the same observed proxy block $X_b$ gives
\[
\tilde S_{\pi(b)}
\;=\;
\tilde f_{\pi(b)}^{-1}(X_b)
\;=\;
\tilde f_{\pi(b)}^{-1}(f_b(S_b))
\;=\;
g_b(S_b)
\qquad\text{a.s.}
\]
for each $b$. Rearranging the definition of $g_b$ yields
\[
\tilde f_{\pi(b)}=f_b\circ g_b^{-1}.
\]
Therefore any two admissible deterministic representations of the same proxy blocks differ only by blockwise bimeasurable
reparameterizations, together with a permutation when the block labels are unlabeled.

Finally, because every admissible deterministic representation of an observational law $P_X$ can be realized by taking
$X\sim P_X$ and setting $S_b=f_b^{-1}(X_b)$ blockwise, the same conclusion characterizes the deterministic observational
equivalence class associated with $P_X$. Hence, within this admissible deterministic class, the latent blocks are identified
exactly up to within-block bimeasurable reparameterization and block permutation.
\hfill\(\square\)

\medskip
Why is this a problem? Without additional anchors, the coordinates of a recovered latent block have no stable scientific meaning by themselves: \(S_b\) and \(\log S_b\) can represent the same evidence under a compensating measurement map, yet imply different claims about linearity, additivity, or intervention magnitude. Only reparameterization-invariant features are identified absent further structure.

\medskip
\noindent\textbf{Proof of Proposition~\ref{thm:proxy_nogo_channel} (Non-identification persists under noise).}
Let $P_S$ denote the joint distribution of $S=(S_1,\dots,S_B)$. The proxy distribution induced by $(P_S,K)$ is, for any measurable
$A$ in the proxy space,
\[
P_X(A)
\;=\;
\int K(A\mid s)\,P_S(ds).
\]
Assume $K(x\mid s)=\prod_{b=1}^B K_b(x_b\mid s_b)$ (conditionally independent channels).
Let $g(s):=(g_1(s_1),\dots,g_B(s_B))$ with each $g_b$ bimeasurable, and define the reparameterized latent variable
$\tilde S := g(S)$ (so $P_{\tilde S}=P_S\circ g^{-1}$). Define the pulled-back channels
\[
\tilde K_b(\,\cdot \mid \tilde s_b)
\;:=\;
K_b(\,\cdot \mid g_b^{-1}(\tilde s_b)),
\qquad\text{and}\qquad
\tilde K(\cdot\mid \tilde s)
\;:=\;
\prod_{m=1}^B \tilde K_b(\cdot\mid \tilde s_b).
\]
Because each $g_b^{-1}$ is measurable and each $K_b$ is a Markov kernel, each $\tilde K_b$ is again a Markov kernel.
Then for any measurable set $A$,
\[
\tilde P_X(A)
\;:=\;
\int \tilde K(A\mid \tilde s)\,P_{\tilde S}(d\tilde s).
\]
Using the defining property of a pushforward measure,
\[
\int \varphi(\tilde s)\,P_{\tilde S}(d\tilde s)
\;=\;
\int \varphi(g(s))\,P_S(ds)
\qquad\text{for any measurable }\varphi,
\]
and taking $\varphi(\tilde s)=\tilde K(A\mid \tilde s)$ yields
\[
\tilde P_X(A)
\;=\;
\int \tilde K(A\mid g(s))\,P_S(ds)
\;=\;
\int K(A\mid s)\,P_S(ds)
\;=\;
P_X(A),
\]
where the middle equality holds because
\[
\tilde K(A\mid g(s))
\;=\;
\prod_{b=1}^B \tilde K_b(A_b\mid g_b(s_b))
\;=\;
\prod_{b=1}^B K_b(A_b\mid s_b)
\;=\;
K(A\mid s)
\]
(first for measurable rectangles $A=\prod_b A_b$, then extended to the full $\sigma$-algebra by standard monotone class arguments).
Hence the induced proxy law $P_X$ is unchanged by blockwise invertible reparameterizations paired with the pulled-back channels,
so the same blockwise non-identification persists under arbitrary conditionally independent measurement noise.
\hfill\(\square\)

\medskip
\noindent\textbf{Proof of Proposition~\ref{prop:narrative} (Narrative collapse as minimax ambiguity).}
Let $M_1,M_2\in\mathcal M(P_X)$ satisfy $q(M_1)\neq q(M_2)$. Write $a_1:=q(M_1)$ and $a_2:=q(M_2)$ with $a_1\neq a_2$.
Because $M_1,M_2\in\mathcal M(P_X)$, both mechanisms induce the same proxy distribution $P_X$, and therefore induce the same
distribution over datasets $D=(X^{(1)},\dots,X^{(n)})$, namely $P_X^n$.

Fix any (measurable) single-valued explanation rule $\widehat a=\widehat a(D)$. Since $D\sim P_X^n$ under both $M_1$ and $M_2$,
all probabilities below are with respect to the same distribution $P_X^n$.
Define
\[
p_1 := \mathbb P\bigl(\widehat a(D)=a_1\bigr),
\qquad
p_2 := \mathbb P\bigl(\widehat a(D)=a_2\bigr).
\]
Because $a_1\neq a_2$, the events $\{\widehat a=a_1\}$ and $\{\widehat a=a_2\}$ are disjoint, hence $p_1+p_2\le 1$.

Under 0--1 loss $\ell(\widehat a,q(M))=\mathbf 1\{\widehat a\neq q(M)\}$, the risks at $M_1$ and $M_2$ are
\[
\mathbb E\bigl[\ell(\widehat a(D),q(M_1))\bigr]
\;=\;
\mathbb P(\widehat a\neq a_1)
\;=\;
1-p_1,
\qquad
\mathbb E\bigl[\ell(\widehat a(D),q(M_2))\bigr]
\;=\;
1-p_2.
\]
Therefore,
\[
\max\Bigl\{\,\mathbb E[\ell(\widehat a(D),q(M_1))],\ \mathbb E[\ell(\widehat a(D),q(M_2))]\,\Bigr\}
\;\ge\;
\frac{(1-p_1)+(1-p_2)}{2}
\;=\;
1-\frac{p_1+p_2}{2}
\;\ge\;
\frac{1}{2}.
\]
Since $\{M_1,M_2\}\subseteq \mathcal M(P_X)$, it follows that
\[
\sup_{M\in\mathcal M(P_X)} \mathbb E\bigl[\ell(\widehat a(D),q(M))\bigr]
\;\ge\;
\max\Bigl\{\,\mathbb E[\ell(\widehat a(D),q(M_1))],\ \mathbb E[\ell(\widehat a(D),q(M_2))]\,\Bigr\}
\;\ge\;
\frac{1}{2}.
\]
Taking the infimum over all single-valued rules $\widehat a$ yields the stated minimax lower bound:
\[
\inf_{\widehat a}\ \sup_{M\in\mathcal M(P_X)}\ \mathbb E\bigl[\ell(\widehat a(D),q(M))\bigr]
\;\ge\;
\frac{1}{2}.
\]
In particular, no single-valued rule can be uniformly correct over $\mathcal M(P_X)$ whenever two observationally compatible
mechanisms disagree on the query.
\hfill\(\square\)

\begin{corollary}[Scale cannot resolve fixed-regime non-identification]\label{cor:scale_fixed_regime}
Let $D_n\sim P_O^n$ be data from a fixed observational regime. If there exist $M_1,M_2\in\mathcal M(P_O)$ such that $q(M_1)\neq q(M_2)$, then for any estimator $\hat q_n=\hat q_n(D_n)$,
\[
\sup_{M\in\{M_1,M_2\}} \mathbb P_M\{\hat q_n(D_n)\neq q(M)\}\ge \tfrac12
\]
for every $n$.
\end{corollary}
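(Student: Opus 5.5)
The plan is to reduce the corollary to the two-point argument already used in the proof of Proposition~\ref{prop:narrative}. The only new ingredient is to make explicit that the sample size $n$ is arbitrary but fixed while the observational regime (and hence $P_O$) is held constant. Fix $n$ and an arbitrary measurable estimator $\hat q_n$. Write $a_1:=q(M_1)\neq a_2:=q(M_2)$.

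\textbf{Step 1 (same data law).} Since $M_1,M_2\in\mathcal M(P_O)$, both mechanisms induce the observational law $P_O$. Under the fixed regime the $n$ observations are i.i.d., so both induce the same product law $P_O^n$ on $D_n$. Hence $\mathbb P_{M_1}$ and $\mathbb P_{M_2}$ coincide on every event measurable with respect to $D_n$, and in particular on $\{\hat q_n(D_n)=a_j\}$. This is the only place where membership in $\mathcal M(P_O)$ and the fixed-regime assumption are used. I would note that if the regime changed with $n$ (new environments or interventions), $P_O$ would change and this step would fail. That is precisely the content of Alternative View~2.

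\textbf{Step 2 (two-point bound).} Set $p_j:=P_O^n(\hat q_n=a_j)$. Disjointness gives $p_1+p_2\le 1$. Then
\[
\max_{j}\,\mathbb P_{M_j}\{\hat q_n\neq a_j\}\;\ge\;\tfrac12\bigl[(1-p_1)+(1-p_2)\bigr]\;\ge\;\tfrac12 .
\]
This is verbatim the computation in the proof of Proposition~\ref{prop:narrative}, with $P_X$ replaced by $P_O$ and the supremum taken over the two-point set $\{M_1,M_2\}$ rather than over all of $\mathcal M(P_O)$. Alternatively, one can simply invoke Proposition~\ref{prop:narrative} applied to the compatibility set $\{M_1,M_2\}$: its proof only ever uses these two mechanisms, and its 0--1 risk equals $\mathbb P_M\{\hat q_n\neq q(M)\}$.

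\textbf{Step 3 (uniformity in $n$).} Because $n$ and $\hat q_n$ were arbitrary and the bound $\tfrac12$ does not depend on $n$, it holds for every $n$. Consequently $\liminf_n$ of the worst-case error is at least $\tfrac12$, so no sequence of estimators is consistent for $q$ over $\{M_1,M_2\}$.

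There is no real obstacle here. The only point needing care is Step~1: one must check that ``fixed observational regime'' really means the data law $P_O^n$ is identical under both mechanisms for every $n$. That requires i.i.d. sampling from $P_O$, and in particular no dependence of the sampling design on $M$ or on $n$. I would state this reading explicitly before invoking the two-point bound.
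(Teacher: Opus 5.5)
Your proposal is correct and follows essentially the same argument as the paper: both mechanisms induce the same law $P_O^n$ on $D_n$, so the estimator has the same distribution under each, and since $q(M_1)\neq q(M_2)$ it cannot be correct with probability above $\tfrac12$ under both, for every $n$. You spell out the disjointness computation $p_1+p_2\le 1$ that the paper's one-line proof leaves implicit (it is the same computation as in the proof of Proposition~\ref{prop:narrative}), and you correctly note that it is that proof, not the Proposition's statement, that yields a bound for the supremum over the two-point set $\{M_1,M_2\}$.
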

\noindent\textbf{Proof.}
The two observational regimes are identical, so $D_n$ has the same distribution under $M_1$ and $M_2$. Any estimator therefore takes the same distribution of values under both mechanisms. Since $q(M_1)\neq q(M_2)$, the estimator cannot be correct with probability greater than \(1/2\) on both simultaneously. Taking the worst case over $\{M_1,M_2\}$ gives the bound.
\hfill\(\square\)

\section{Off-Support Non-Identification Under Thin Support}
\label{app:stronger}

\textbf{Proposition A.1 (Essential non-identification under observational support).}
Let \(\mathcal{X}\subset\mathbb{R}^d\) be a closed embedded \(k\)-dimensional submanifold with \(k<d\), and assume the observational distribution of \(X\) is supported on \(\mathcal{X}\). Let \(\mathcal{F}\) be a class of scalar-valued functions \(f:\mathbb R^d\to\mathbb R\) closed under smooth, compactly supported perturbations. Then for any \(f\in\mathcal{F}\) and any open set \(U\) with \(U\cap\mathcal{X}=\emptyset\), there exists \(g\in\mathcal{F}\) such that \(f=g\) on \(\mathcal{X}\) but \(f\neq g\) on \(U\). Hence \(f\) is not identified from observational data on \(\mathcal{X}\) without additional assumptions that constrain off-support behavior.

\textbf{Proof.}
Because \(U\) is open, pick an open ball \(B\subset U\). Since \(U\cap \mathcal X=\emptyset\),
then \(B\cap \mathcal X=\emptyset\). Let \(\phi\) be a smooth bump function supported on \(B\) with \(\phi(x_0)=1\) at some \(x_0\in B\). Define \(g=f+c\phi\) for any \(c\neq 0\). Then \(g=f\) on \(\mathcal{X}\) and \(g(x_0)\neq f(x_0)\). Closure under perturbations implies \(g\in\mathcal{F}\). \hfill\(\square\)

The assumption on $\mathcal F$ is a convenient sufficient condition; any hypothesis class that can represent (or approximate)
localized perturbations exhibits the same qualitative non-identification. This includes rich nonparametric classes and, in practice,
overparameterized neural networks that can approximate localized perturbations on off-support regions.

\section{Formal Comparison with Rashomon and Uncertainty}
\label{app:compare}

Let $S$ denote latent mechanism-level variables, let $X$ denote observed proxies generated by a (possibly stochastic) measurement channel $K(\cdot\mid S)$, and let $Y$ denote an outcome generated by some structural law $Y\sim P(\cdot\mid S)$ or, at the observational level, by the induced conditional $P(Y\mid X)$.
The claim concerns \emph{mechanism equivalence under proxies}: for a fixed proxy observational law $P_X$, define the \emph{mechanism compatibility set}
$
\mathcal M(P_X)\;=\;\bigl\{\,M=(P_S,K)\;:\; S\sim P_S,\ X\mid S\sim K(\cdot\mid S)\ \text{and the induced marginal is }P_X\,\bigr\}.
$
When $\lvert\mathcal M(P_X)\rvert>1$, distinct mechanisms $M_1\neq M_2$ (different latent stories and/or measurement channels) induce the same proxy evidence; thus mechanistic queries $q(M)$ (e.g., counterfactual responses or derivative constraints) are not identified from proxies without additional structure.
This is different from the \emph{Rashomon effect}, which is \emph{model-level} multiplicity under a fixed predictive target: for a hypothesis class $\mathcal H$ and risk $R(h)=\mathbb E[\ell(h(X),Y)]$, the Rashomon set is
$
\mathcal R_\varepsilon \;=\;\{\,h\in\mathcal H:\ R(h)\le \inf_{h'\in\mathcal H}R(h')+\varepsilon\,\},
$
capturing that many predictors achieve essentially the same predictive risk.
Rashomon multiplicity lives to the right of \(X\); the proxy gap lives to the left of \(X\).
It is also distinct from \emph{underspecification}, which emphasizes that many predictors can match in-domain objectives yet behave differently under deployment, distribution shift, or interventions: there exist $h_1,h_2\in\mathcal R_\varepsilon$ such that $h_1(X)\neq h_2(X)$ on $X$ drawn from a perturbed or shifted environment.
In short, Rashomon and underspecification describe multiplicity \emph{within} a chosen predictor class $\mathcal H$ for a fixed
observational task (and thus can shrink if one changes $\mathcal H$ or adds regularization), whereas the proxy-gap claim is
\emph{mechanism-level non-identification}: even at the population level, and even with infinite data generated through the same
measurement channel $K(\cdot\mid S)$ (so $P_X$ and even $P(Y\mid X)$ are known exactly), there can remain multiple distinct
$(P_S,K)$ (hence multiple incompatible mechanistic answers $q(M)$) consistent with the same proxy evidence.
Causal-quartet constructions provide concrete analogues: identical summaries can require different causal adjustment decisions \citep{mcgowan2024causal}, and identical average effects can mask different latent treatment-effect patterns \citep{gelman2024causalquartets}.

\section{Additional Pea Plant Details and Results}
\label{app:pea}
Each MLP is trained with early stopping (patience \(=10\)) and maximum iterations \(=250\), using random seeds \(0,1,\dots,59\) (so 60 total fits). The analysis evaluates predictive performance using held-out cross-entropy (log loss) on a 25\% test split.  Figure~\ref{fig:peas_regimeA} visualizes the central identification issue in Regime~A: when evidence is reduced to pooled F$_2$ phenotype proxies, mechanistically distinct explanations can be observationally equivalent on the support of the data. In particular, the Mendelian dihybrid mechanism (dominance with independent segregation of the two traits) and the ``independent traits'' alternative (fixed marginals \(p_R=p_Y=3/4\)) induce essentially the same pooled proxy distribution, so the proxy channel alone does not select a unique mechanism. Figure~\ref{fig:peas_regimeB} shows how this equivalence breaks once experimental structure enters: the alternative mechanism is ruled out because it cannot simultaneously explain the outcomes of multiple designed crosses, while the Mendelian mechanism remains well calibrated across \texttt{F2\_self}, \texttt{testcross}, \texttt{monoA}, and \texttt{monoB}.

Table~\ref{tab:peas_concrete_query} gives the concrete prediction behind the spread plot. For the testcross \texttt{RY}\(\times\)\texttt{WG}, Mendelian inheritance predicts equal probabilities for the four offspring phenotypes. Without cross-type information, the MLP mean assigns nearly half its mass to \texttt{RY}; with testcross labels and support, the predictions move close to the Mendelian answer. This is not evidence for a competing biological law, but a symptom of asking a phenotype-only F$_2$ predictor to answer a counterfactual that its evidence regime does not identify.

\begin{table}[h!]
\centering
\small
\caption{\textbf{Concrete counterfactual prediction for \texttt{RY}\(\times\)\texttt{WG} with \texttt{cross=testcross}.}
The Mendelian testcross answer assigns probability \(1/4\) to each offspring phenotype. The MLP rows report means over near-optimal fits. In the phenotype-only F$_2$ regime, MLPs are forced to answer an underidentified query and produce an unsupported counterfactual distribution. In the design regime with testcross support and labels, predictions move close to the Mendelian target.}
\begin{tabular}{lcccc}
\toprule
 & \texttt{RG} & \texttt{RY} & \texttt{WG} & \texttt{WY} \\
\midrule
Mendelian testcross & .25 & .25 & .25 & .25 \\
No-design MLP mean & .20 & .49 & .10 & .20 \\
With-design MLP mean & .26 & .26 & .24 & .24 \\
\bottomrule
\end{tabular}
\label{tab:peas_concrete_query}
\end{table}

\paragraph{Counterfactual Dispersion and Its Collapse Under Design.}
Figures~\ref{fig:peas_cf_nodesign} and~\ref{fig:peas_cf_withdesign} provide an intuition for the spread statistic reported in the main text. Each point cloud displays the predicted offspring phenotype distribution for the same observed parent phenotype pair \texttt{RY}\(\times\)\texttt{WG} across many independently trained MLP fits (restricted to near-optimal seeds). Without cross-type labels and testcross support (no design), this counterfactual query is underdetermined: multiple equally predictive models produce noticeably different probability vectors over \{\texttt{RG},\texttt{RY},\texttt{WG},\texttt{WY}\}. When the design regime supplies the cross label and testcross support, the query becomes well-posed (here, \texttt{cross=testcross}), and the counterfactual predictions concentrate, reflecting a smaller mechanism-compatible equivalence class.

\paragraph{Why Restrict to Near-Optimal Seeds.}
Figure~\ref{fig:peas_blackbox_hist} shows the distribution of held-out log loss across MLP random seeds and highlights the near-optimal subset used in Figures~\ref{fig:peas_cf_nodesign}--\ref{fig:peas_cf_withdesign} and in the spread-collapse summary. This addresses the concern that mechanistic disagreement is an optimization artifact: even among predictors with essentially indistinguishable predictive performance, counterfactual mechanistic predictions can diverge when identifying structure is absent, and concentrate when that structure is restored.

\begin{figure}[h!]
  \centering
  \includegraphics[width=.75\linewidth]{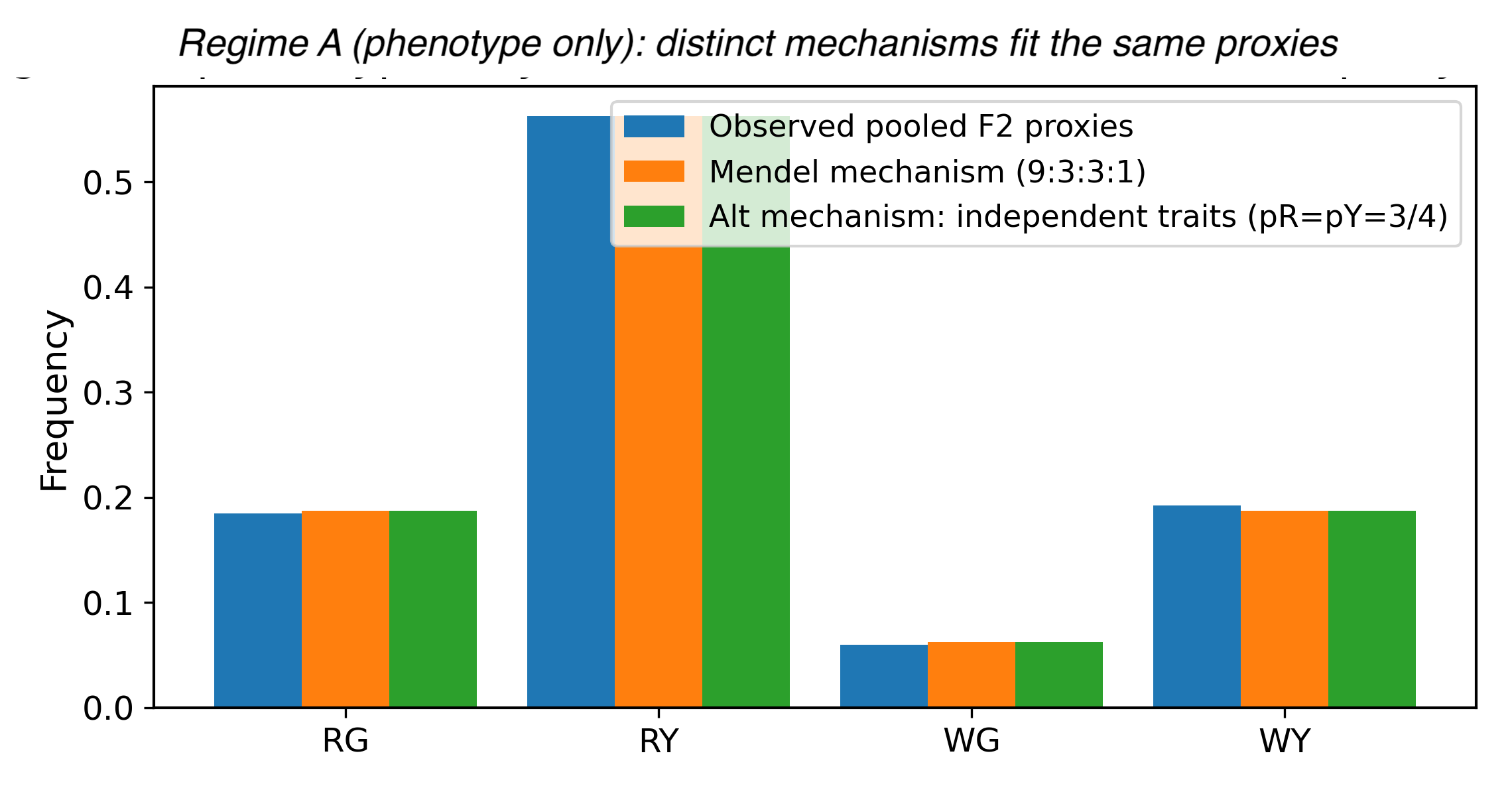}
  \caption{\textbf{Regime A (phenotype-only): observational equivalence at the proxy level.}
  Pooled F$_2$ phenotype frequencies over four proxy categories (\texttt{RG}, \texttt{RY}, \texttt{WG}, \texttt{WY}). The Mendelian dihybrid mechanism (dominance + independent assortment; \(9{:}3{:}3{:}1\)) and an alternative ``independent traits'' mechanism with fixed marginals \(p_R=p_Y=3/4\) are distinct mechanistic stories but induce essentially the same pooled proxy distribution in this regime.}
  \label{fig:peas_regimeA}
\end{figure}

\begin{figure}[h!]
  \centering
  \includegraphics[width=.75\linewidth]{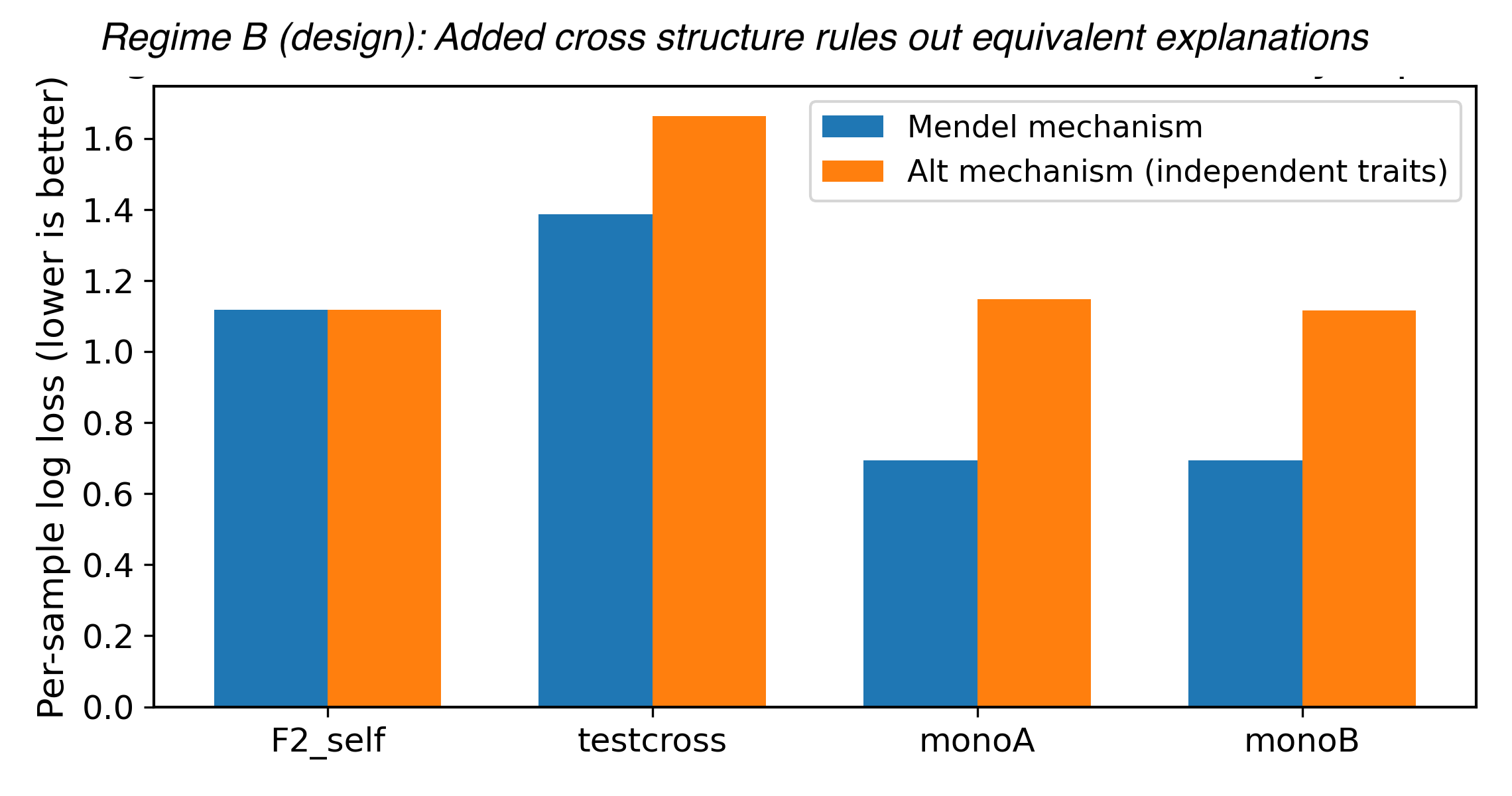}
  \caption{\textbf{Regime B (design): cross structure rules out an observationally equivalent alternative.}
  Per-sample log loss under additional labeled crosses (\texttt{F2\_self}, \texttt{testcross}, \texttt{monoA}, \texttt{monoB}). The alternative mechanism matches Regime~A by construction but fails when asked to simultaneously explain the outcomes of multiple designed crosses; the Mendelian mechanism remains well-calibrated across all crosses.}
  \label{fig:peas_regimeB}
\end{figure}

\begin{figure}[h!]
  \centering
  \includegraphics[width=.75\linewidth]{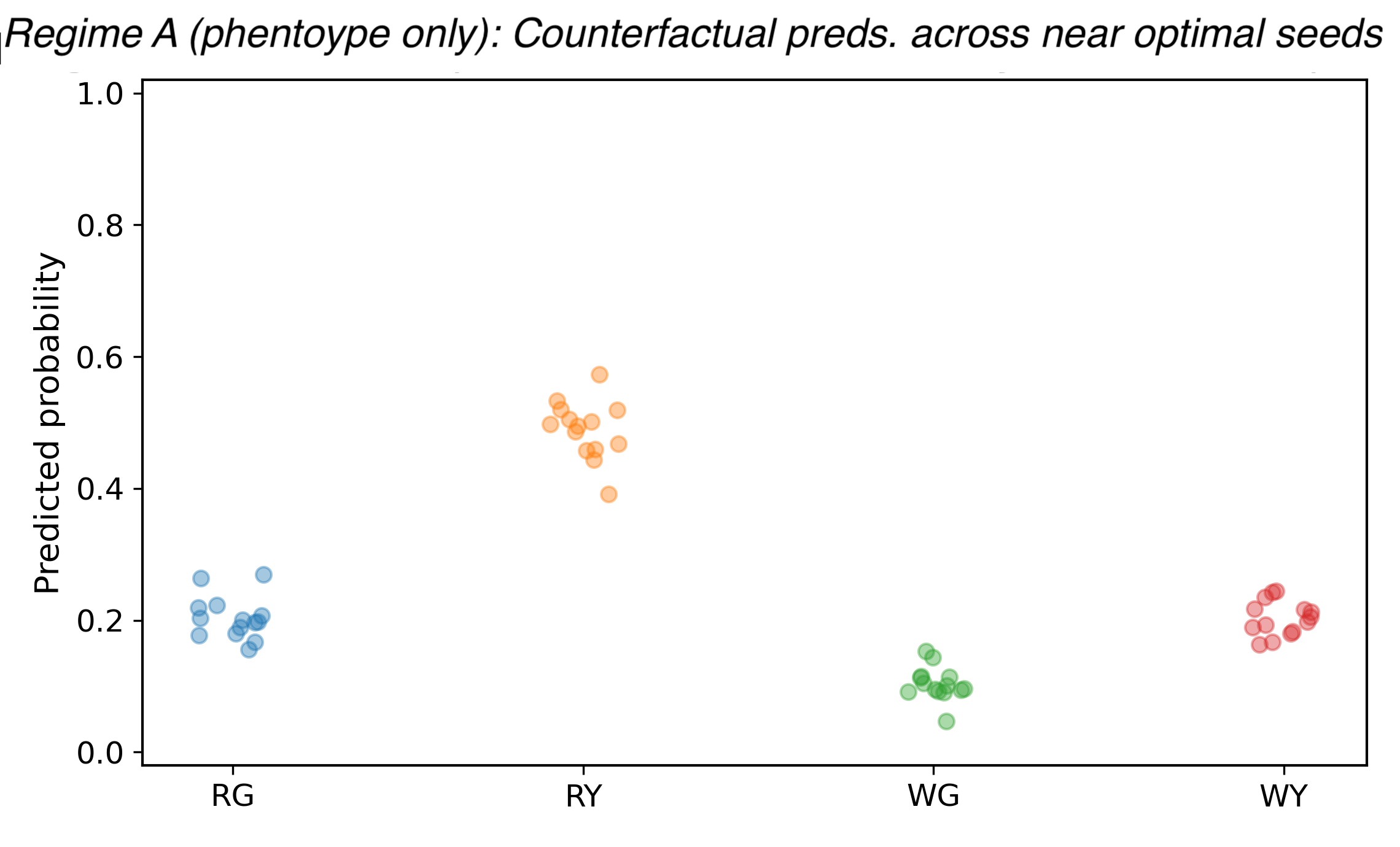}
  \caption{\textbf{No design label: counterfactual predictions vary across near-optimal black-box fits.}
  Each point is the predicted probability of a phenotype category under a counterfactual query (\texttt{RY}\(\times\)\texttt{WG}) from a different near-optimal MLP fit trained without cross-type labels. Even among near-optimal predictors, counterfactual mechanistic predictions exhibit nontrivial spread when identifying structure is absent.}
  \label{fig:peas_cf_nodesign}
\end{figure}

\begin{figure}[h!]
  \centering
  \includegraphics[width=.75\linewidth]{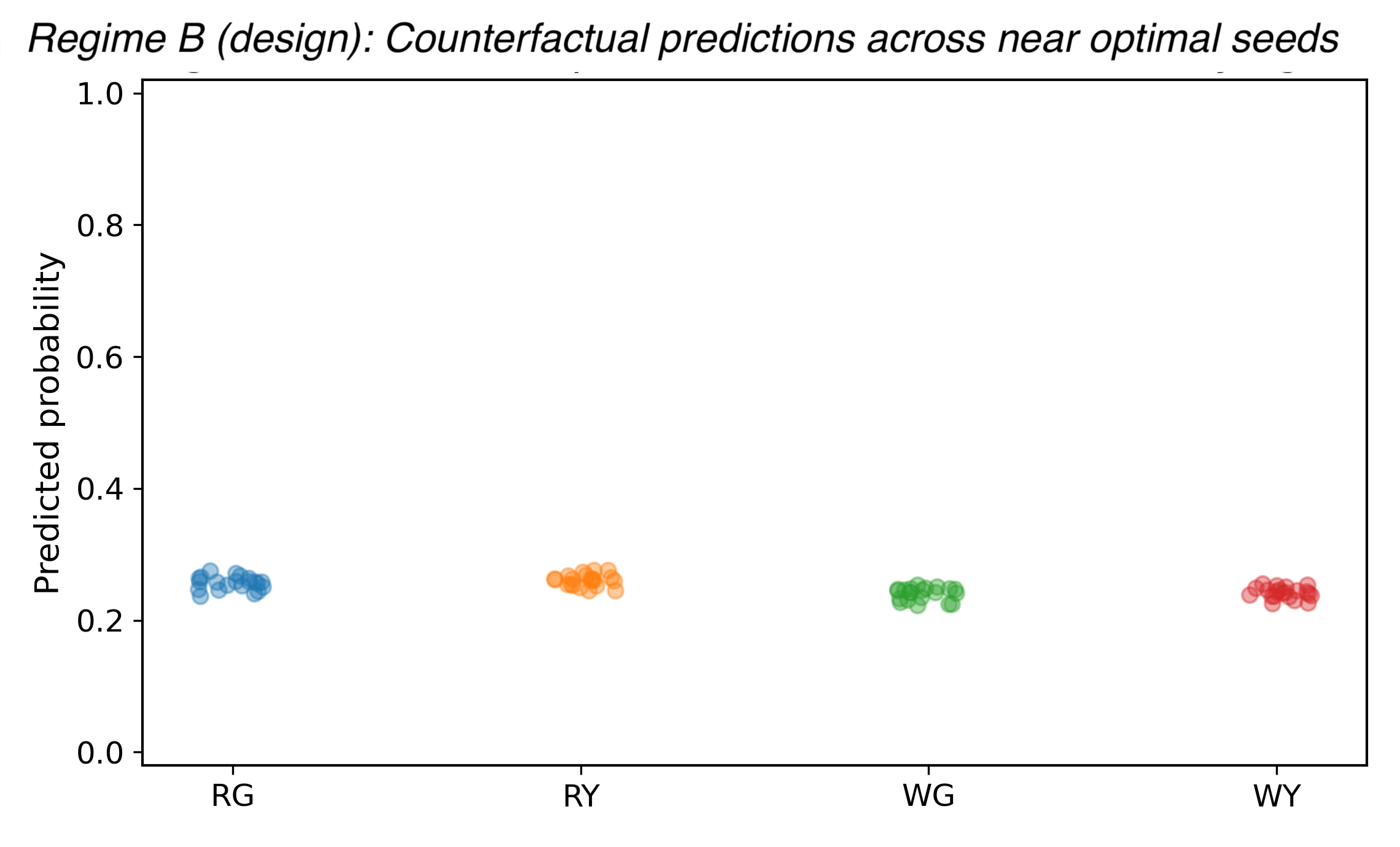}
  \caption{\textbf{With design label: counterfactual predictions concentrate.}
  Same counterfactual parent phenotype pair as Figure~\ref{fig:peas_cf_nodesign}, but now the query is well-posed by conditioning on the experimental design label (\texttt{cross=testcross}) and models are trained with cross labels. Counterfactual predictions concentrate substantially across near-optimal seeds, reflecting reduced mechanistic ambiguity.}
  \label{fig:peas_cf_withdesign}
\end{figure}

\begin{figure}[h!]
  \centering
  \includegraphics[width=.75\linewidth]{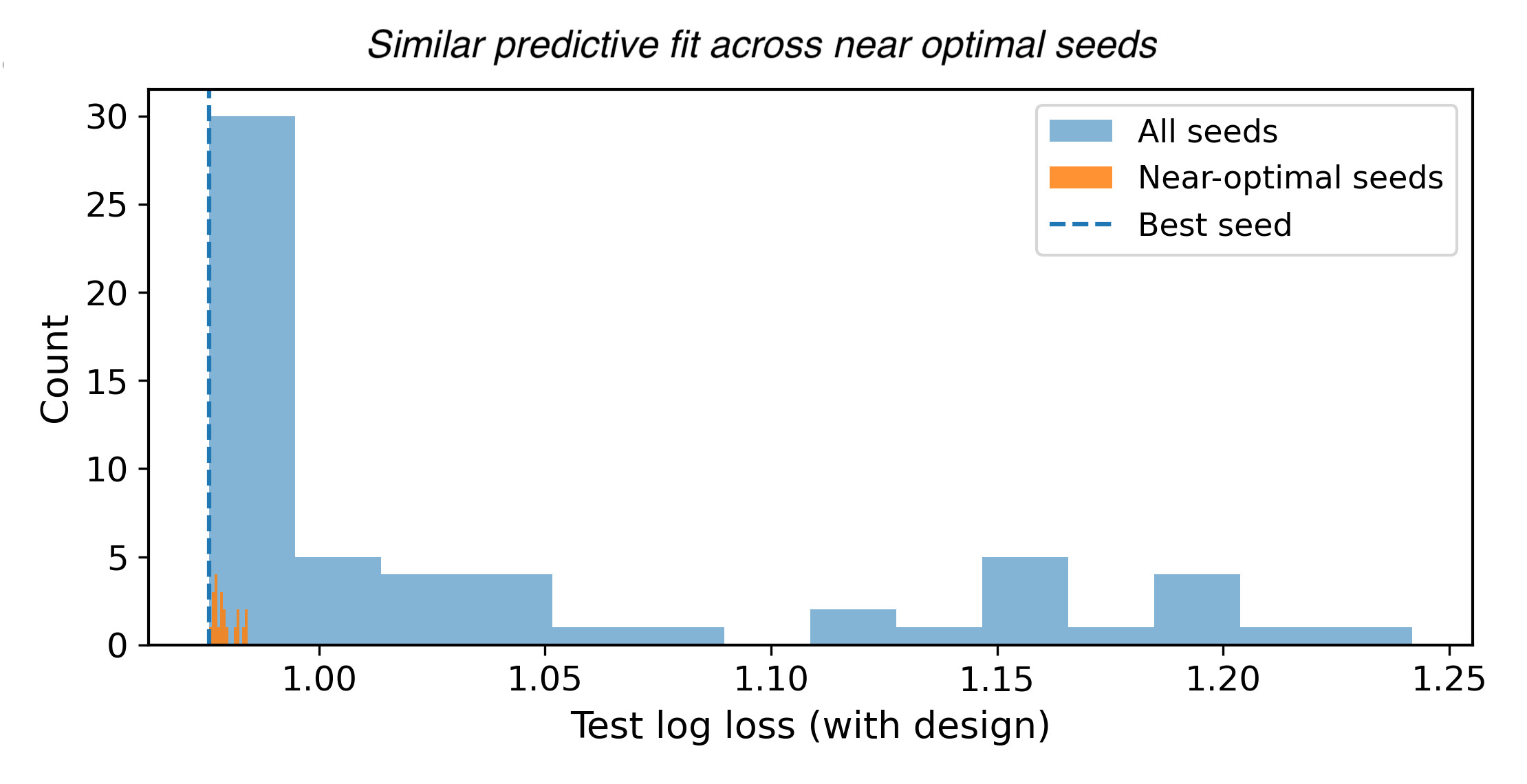}
  \caption{\textbf{Black-box multiplicity: many near-optimal fits.}
  Distribution of held-out test log loss across MLP random seeds when trained with design labels; near-optimal seeds are highlighted. This justifies evaluating mechanistic disagreement \emph{within} a set of comparably good predictors, rather than attributing disagreement to poor optimization.}
  \label{fig:peas_blackbox_hist}
\end{figure}

\section{Simulation Study: In-Domain Agreement, Mechanistic Disagreement}\label{app:sim}

The appendix includes a minimal synthetic study to illustrate that (i) many flexible models can achieve almost identical in-domain prediction on high-dimensional proxies while (ii) disagreeing substantially on mechanistic quantities (derivatives and off-support responses).

\paragraph{Data-Generating Process.}
The data-generating process draws $x_1\sim\mathcal N(0,1)$; sets \(x_2=x_1+\epsilon_{x,2}\) with \(\epsilon_{x,2}\sim\mathcal N(0,\sigma_x^2)\); creates correlated proxy blocks
\(x_3,\dots,x_{d-d_{\text{noise}}}\) via \(x_j=\sin(x_1)+0.3\cos(0.5x_1)+\epsilon_{x,j}\) with
\(\epsilon_{x,j}\overset{i.i.d.}{\sim}\mathcal N(0,\sigma_x^2)\); and adds $d_{\text{noise}}$ pure-noise proxies $\mathcal N(0,1)$. The simulation uses $d=20$, $d_{\text{noise}}=6$, $\sigma_x=0.15$, and $y=\sin(x_1)+0.1x_2+\epsilon_y$ with $\epsilon_y\sim\mathcal N(0,0.05^2)$. Train/test sizes are $n_{\text{train}}=2000$, $n_{\text{test}}=1000$.

\paragraph{Models.}
The simulation trains three model families across independent random seeds: (i) MLP (two hidden layers of width 64, ReLU, Adam, 80 epochs, LR $2\times10^{-3}$), (ii) Random Forest regressor (300 trees, min leaf 5), and (iii) Gradient Boosting regressor (400 trees, depth 3, LR 0.05). Counts: 10 MLPs, 8 RFs, 4 GBRs. All runs use the same data; variation arises only from random seeds (initialization, sampling order, and tree randomness).

\paragraph{Mechanistic Probes.}
For each trained model the analysis computes (a) test MSE on an i.i.d.\ hold-out set; (b) partial derivatives $\partial\hat y/\partial x_j$ at 80 test points (autograd for MLPs, finite differences for tree models), summarized by $\mathrm{Var}(\partial\hat y/\partial x_j)$ across models; and (c) an off-manifold intervention slice where $x_2$ is varied in $[-3,3]$ while other coordinates are held fixed. True gradients are $\partial y/\partial x_1=\cos(x_1)$, $\partial y/\partial x_2=0.1$, and $0$ for $j\ge3$, so variance of order $10^{-3}$ or below is practically small, while variance $\gtrsim 10^{-2}$ implies meaningfully different mechanistic stories. In the saved simulation output, the displayed gradient-variance values range from roughly \(2.4\times10^{-5}\) to \(1.2\times10^{2}\), with most values near \(10^{-3}\) and a heavy upper tail driven by tree-model finite-difference gradients on mechanism-bearing and proxy coordinates.

\paragraph{Results.}
\begin{figure}[h!]
  \centering
  \includegraphics[width=0.75\linewidth]{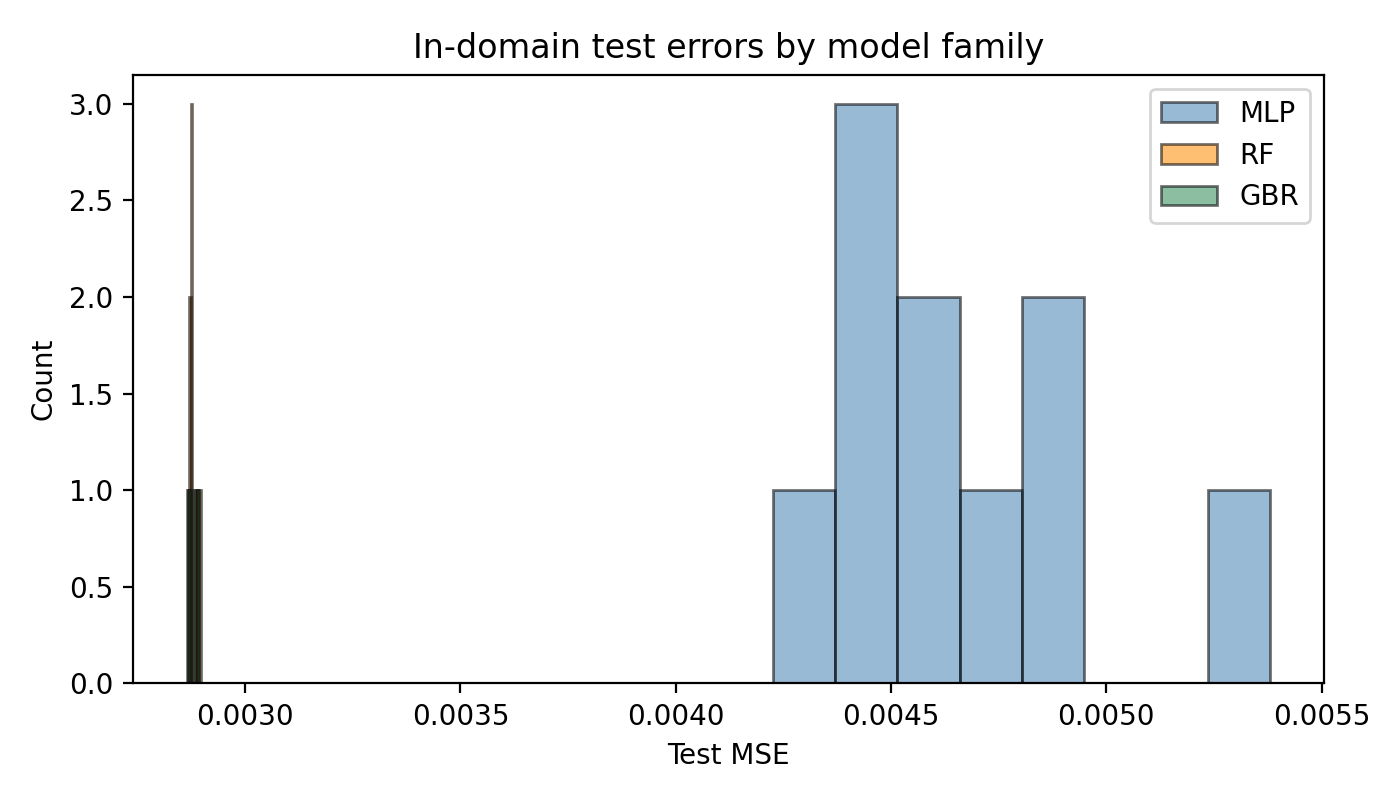}
  \caption{\textbf{In-domain fit.} Histogram of test MSE over all seeds and model families. Despite different inductive biases, all models cluster tightly ($\mathrm{MSE}\approx 0.003$--$0.005$), showing strong in-domain agreement.}
  \label{fig:hist_mse}
\end{figure}

\begin{figure}[h!]
  \centering
  \includegraphics[width=0.68\linewidth,height=0.72\textheight,keepaspectratio]{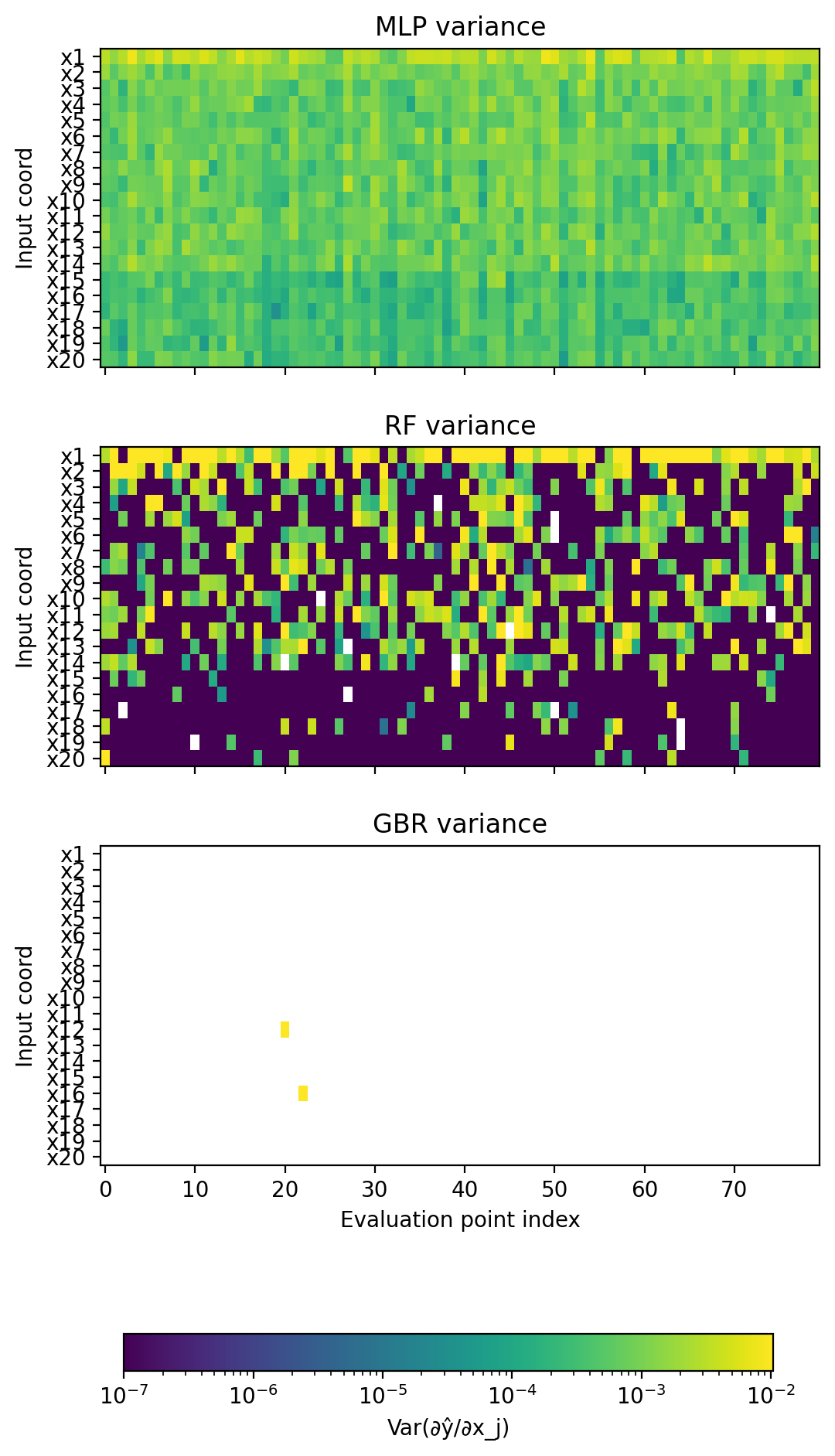}
  \caption{\textbf{Gradient disagreement by family.} Variance of partial derivatives $\partial\hat y/\partial x_j$ across seeds within each model family, log-scaled. Dark indicates agreement; bright indicates disagreement. The colorbar (bottom) is the scale for all panels. RF and GBR show sizable variance on $x_1$ and $x_2$ (mechanism-bearing coordinates) and spurious variance on proxy/noise coordinates; MLPs exhibit lower but nonzero variance.}
  \label{fig:grad_var_family}
\end{figure}

\begin{figure}[h!]
  \centering
  \includegraphics[width=0.75\linewidth]{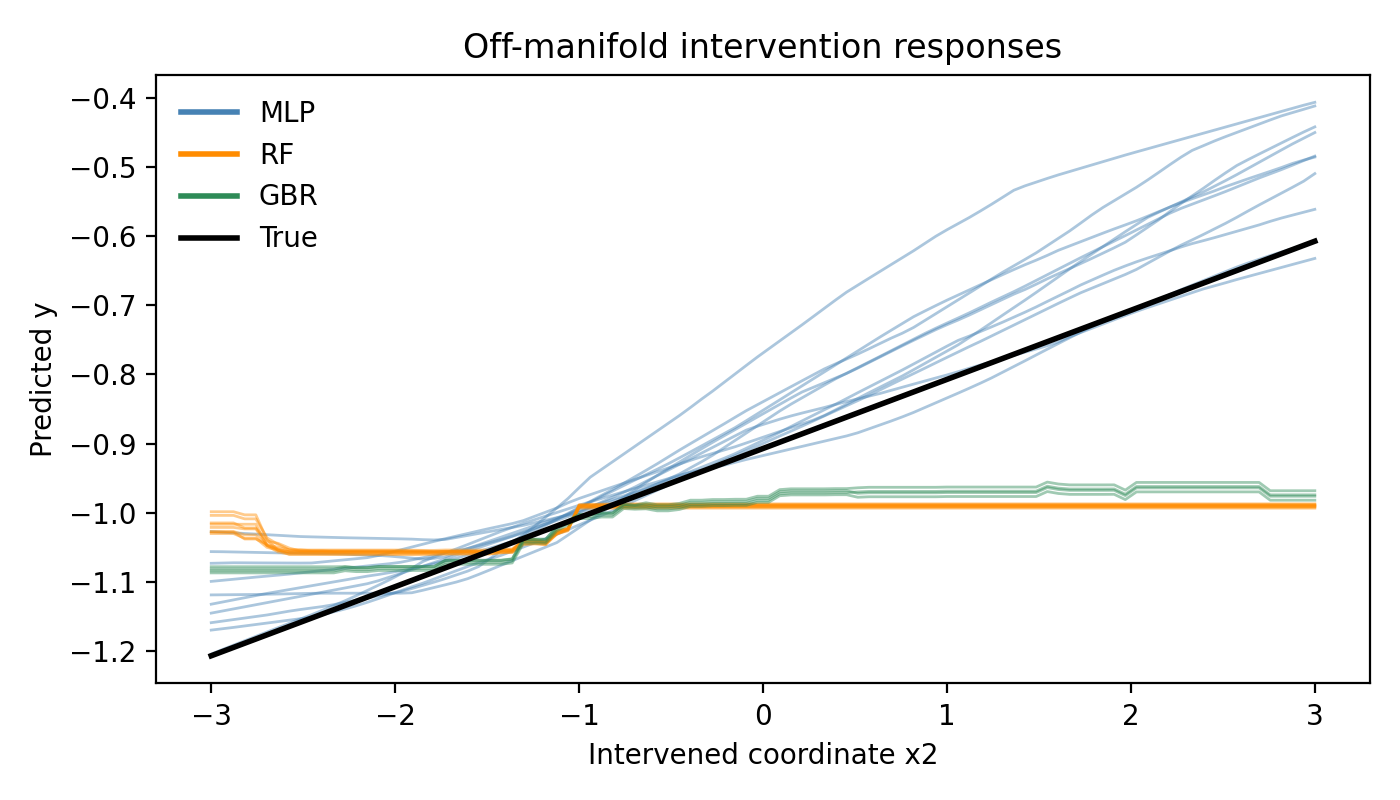}
  \caption{\textbf{Off-manifold interventions.} Each gray line is one model’s prediction when intervening on $x_2$ (holding other coordinates fixed); black is the true response. Although in-domain MSEs are nearly identical (Figure~\ref{fig:hist_mse}), models diverge markedly off the observed manifold, underscoring underdetermination of mechanism.}
  \label{fig:off_manifold}
\end{figure}

The key pattern is consistent with this paper's claims: tight predictive clustering (Figure~\ref{fig:hist_mse}) coexists with substantial gradient variance (Figure~\ref{fig:grad_var_family}) and divergent off-manifold responses (Figure~\ref{fig:off_manifold}). Because the only differences across runs are random seeds, these disagreements are attributable to underdetermination given proxies and model flexibility, not to differences in data or hyperparameters. The gradient variances provide an interpretable scale: most true gradients are $\mathcal O(10^{-1})$, so observed variances $>10^{-2}$ imply materially different sensitivities and therefore different mechanistic stories, even when predictions on the observed support agree.

\begingroup
\footnotesize
\section{Six Worked Examples of Identifying Structure}
\label{app:ident_structure_examples}

This appendix instantiates the definition in Appendix~\ref{app:ident_structure_def} in six settings that are representative of
modern ``mechanistic ML.'' Each example makes the same decomposition explicit:
\[
\text{mechanism-side restrictions}\quad(\mathcal A_{\mathrm{mech}})\qquad
\text{observation-side restrictions}\quad(\mathcal A_{\mathrm{obs}})\qquad
\text{design / regimes}\quad(r),
\]
and points to a mechanistic query $q(M)$ that becomes identifiable only after the relevant structure is imposed.

\vspace{0.5ex}
\noindent\textbf{Example 1: Disentanglement versus identifiable latent variable models.}
Locatello et al.\ show that unsupervised disentanglement is generically non-identifiable without additional inductive bias or
supervision; Khemakhem et al.\ show that identifiability can be restored by adding an observed auxiliary variable and restricting
how it modulates the latent distribution \citep{pmlr-v97-locatello19a,pmlr-v108-khemakhem20a}.
\[
\begin{aligned}
\textbf{Latent mechanism }S &: S=(S_1,\dots,S_d)\ \text{(``factors'')},\\
\textbf{Proxies }X &: X=f(S)\ \text{(unknown nonlinear mixing)},\\
\mathcal A_{\mathrm{mech}} &: \text{(baseline)}\ S_1,\dots,S_d\ \text{independent (factorized prior)},\\
\mathcal A_{\mathrm{obs}} &: \text{(baseline)}\ f\ \text{invertible/smooth (a broad class)},\\
r &: \text{(baseline)}\ \text{single environment; observe only }X.\\[0.5ex]
q(M) &: \text{``recover the latent factors'' (up to permutation/componentwise transforms).}
\end{aligned}
\]
Under the baseline regime and assumptions, the equivalence class is large: many distinct latent representations induce the same
$P_X$ (non-identification) \citep{pmlr-v97-locatello19a}. Identification is recovered by strengthening both observation and design:
\[
\begin{aligned}
\mathcal A_{\mathrm{mech}}' &: P_{S\mid U}(s\mid u)=\prod_{j=1}^d \exp\{\eta_j(u)^\top T_j(s_j)-A_j(\eta_j(u))\}\ \text{(factorized exponential family)},\\
\mathcal A_{\mathrm{obs}}' &: X=f(S)\ \text{with suitable regularity/invertibility},\\
r' &: \text{observe an auxiliary variable }U\ \text{(labels/environments/context) that varies }\eta(u).
\end{aligned}
\]
Then latent variables become identifiable up to simple transformations (the precise equivalence depends on assumptions), turning an
ill-posed representation-learning task into an identification problem with explicit structure \citep{pmlr-v108-khemakhem20a}.

\vspace{1.0ex}
\noindent\textbf{Example 2: Causal effects with proxy variables for an unmeasured confounder.}
Miao, Geng, and Tchetgen Tchetgen show that causal effects can be nonparametrically identified in the presence of an unmeasured
confounder $U$ if one observes multiple proxy variables with the right conditional-independence and completeness structure
\citep{MiaoGengTchetgenTchetgen2018ProxyConfounder}.
\[
\begin{aligned}
\textbf{Latent mechanism }S &: S=(U,\text{(other causes)}),\\
\textbf{Observed }(X,Y) &: \text{treatment }X,\ \text{outcome }Y,\\
\textbf{Proxies }(W,Z) &: W,Z\ \text{proxy variables influenced by }U,\\
\mathcal A_{\mathrm{mech}} &: \text{structural confounding: }U\to X,\ U\to Y,\ X\to Y,\\
\mathcal A_{\mathrm{obs}} &: \text{proxy structure yielding testable conditional independences}\\
&\text{and an operator invertibility/completeness condition},\\
r &: \text{observe }(Y,X,W,Z)\ \text{in one environment (no intervention required).}\\[0.5ex]
q(M) &: P(Y\mid do(X=x))\ \text{or an average treatment effect.}
\end{aligned}
\]
Here the identifying structure lives crucially in $\mathcal A_{\mathrm{obs}}$: observing \emph{two} appropriately related proxies and
assuming a completeness/invertibility property gives enough information to identify $q(M)$ even though $U$ is never observed
\citep{MiaoGengTchetgenTchetgen2018ProxyConfounder}. The example directly illustrates the paper’s theme: mechanism discovery (a
causal effect) is not guaranteed by flexible prediction, but can become possible when the \emph{measurement channel} is structured
and stated explicitly.

\vspace{1.0ex}
\noindent\textbf{Example 3: Causal representation learning from grouped observables.}
Morioka and Hyv\"arinen establish identifiability for causal representation learning by imposing a grouping/modularity structure on
the observation process: observational variables can be partitioned into groups that each depend on a particular latent factor
\citep{pmlr-v235-morioka24a}.
\[
\begin{aligned}
\textbf{Latent mechanism }S &: (S_1,\dots,S_M)\ \text{with a causal model among blocks},\\
\textbf{Proxies }X &: X=(X_{G_1},\dots,X_{G_M})\ \text{partitioned into observed groups }G_m,\\
\mathcal A_{\mathrm{mech}} &: \text{a causal model among }S_1,\dots,S_M\ \text{(allowing confounders/cycles under stated conditions)},\\
\mathcal A_{\mathrm{obs}} &: X_{G_m}\ \text{depends (primarily) on }S_m\\
&\text{via a group-specific mixing/channel; groups are modular},\\
r &: \text{single environment; observe all groups (the key is the grouped measurement structure).}\\[0.5ex]
q(M) &: \text{latent causal structure and/or latent features (up to allowable reparameterizations).}
\end{aligned}
\]
This is a clean example where identification is purchased mainly by \emph{measurement modularity}: the grouping assumption shrinks
$\mathcal M(P_X)$ by ruling out alternative mixings that would otherwise be observationally equivalent \citep{pmlr-v235-morioka24a}.
For a precise theorem in this direction, see Theorem~1 of \citet{pmlr-v235-morioka24a}.
Appendix~\ref{app:min_ident} provides a complementary, more general template for block identification and orientation under grouped
observables plus additional asymmetries.

\vspace{1.0ex}
\noindent\textbf{Example 4: Orientation and causal features from invariance across environments (ICP).}
Invariant causal prediction treats identification as an invariance-matching problem across environments: causal parents are those
predictors for which the conditional law of $Y$ given those predictors remains stable across environments \citep{peters2016icp}.
\[
\begin{aligned}
\textbf{Latent mechanism }S &: \text{a causal model generating }(X,Y),\\
\textbf{Observed }(X^{(e)},Y^{(e)}) &: \text{data collected across environments }e\in\mathcal E,\\
\mathcal A_{\mathrm{mech}} &: \exists S^\star\subseteq\{1,\dots,p\}\ \text{s.t. }P(Y^{(e)}\mid X^{(e)}_{S^\star})\ \text{is invariant in }e,\\
\mathcal A_{\mathrm{obs}} &: \text{environment labels are observed;}\\
&\text{shifts affect non-causal parts or inputs but preserve the structural law},\\
r &: \text{multi-environment data collection (natural experiments, domain shifts, designed perturbations).}\\[0.5ex]
q(M) &: S^\star\ \text{(a set of causal predictors/parents, or constraints implied by invariance).}
\end{aligned}
\]
This example emphasizes the role of design: identification improves because $r$ supplies variation that breaks observational
equivalence. More i.i.d.\ data from a single environment would not create the invariance contrasts that ICP exploits. This example presents
a template, not a new theorem: it synthesizes the ICP invariance principle \citep{peters2016icp} with grouped-observable
latent-structure results such as \citet{pmlr-v235-morioka24a}.

\vspace{1.0ex}
\noindent\textbf{Example 5: Graph-based identification of interventions (do-calculus and the ID algorithm).}
In causal inference, a causal graph plus intervention semantics can turn an observational distribution into an identifiable causal
quantity. Pearl’s framework formalizes interventions via $do(\cdot)$, and Shpitser and Pearl provide a complete identification
procedure (the ID algorithm) for many settings with hidden variables \citep{Pearl2009,shpitserpearl2006aaai}.
\[
\begin{aligned}
\textbf{Mechanism }M &: \text{a structural causal model over variables }V\ \text{(possibly with latent confounders)},\\
\textbf{Observed }P(V) &: \text{the observational distribution over measured nodes},\\
\mathcal A_{\mathrm{mech}} &: \text{causal Markov property w.r.t.\ a given graph }G\\
&\text{and intervention semantics (edge cutting)},\\
\mathcal A_{\mathrm{obs}} &: \text{the measured variables correspond to nodes in }G;\\
&\text{unmeasured causes are represented as latent nodes},\\
r &: \text{may be purely observational given }G,\ \text{or strengthened by collecting additional nodes / interventions}.\\[0.5ex]
q(M) &: P(Y\mid do(X=x))\ \text{or more general interventional distributions.}
\end{aligned}
\]
Here, identifying structure is explicit and audit-ready: it is exactly the graph and intervention semantics plus any assumptions
about which variables are measured. This is the econometric identification mindset in its most literal form: what is learned is not
``the mechanism'' in the abstract, but a particular interventional query that is identifiable under stated structure.

\vspace{1.0ex}
\noindent\textbf{Example 6: Equation discovery in physical systems (symbolic regression, sparse dynamics, PINNs).}
A large class of ``scientific ML'' methods succeed precisely because they impose strong structure on both the mechanism and the
measurement process \citep{Schmidt2009,Brunton2016,Raissi2019}. The details differ, but the identifying logic is the same.
\[
\begin{aligned}
\textbf{Mechanism }M &: \text{a low-complexity governing law (ODE/PDE) with constrained form},\\
\textbf{Observed }X &: \text{measurements of system states (and sometimes derivatives) under selected conditions},\\
\mathcal A_{\mathrm{mech}} &: \text{a restricted hypothesis class: sparsity, conservation,}\\
&\text{known operator library, smoothness},\\
\mathcal A_{\mathrm{obs}} &: \text{measurement access to the relevant state variables and/or their derivatives,}\\
&\text{with controlled noise/coarsening},\\
r &: \text{designed initial/boundary conditions, forcing, or experimental settings that excite informative variation}.\\[0.5ex]
q(M) &: \text{the governing equation/operator and/or its parameters (and implied invariances/derivatives).}
\end{aligned}
\]
Concretely, symbolic regression restricts the form of candidate laws and searches for simple expressions consistent with data
\citep{Schmidt2009}. Sparse identification (SINDy) assumes that $\dot x(t)=\Theta(x(t))\xi$ for a known feature/library map
$\Theta(\cdot)$ with a sparse coefficient vector $\xi$, and identification depends on measuring (or stably approximating) derivatives
and on trajectories that sufficiently ``excite'' the library \citep{Brunton2016}. Physics-informed neural networks incorporate the
constraint that a PDE residual must vanish and leverage boundary/initial conditions as additional structure; inverse problems
remain identification problems whose solvability depends on what is observed and what regimes are included \citep{Raissi2019}.
Across these methods, the moral is identical to the paper’s: mechanistic success comes from explicit, checkable identifying
structure---not from black-box flexibility alone.
\endgroup

\section{A Formal Definition of Identifying Structure}
\label{app:ident_structure_def}

This appendix makes precise the meaning of ``identifying structure'' and why it necessarily has \emph{three} components:
assumptions about the latent mechanism, assumptions about the proxy process, and assumptions about design/data collection. The formal definition presents this as two assumption bundles (one for the mechanism and one for the process), combined with a design/data collection regime. The definition is
query-specific: identification is always relative to a mechanistic question $q$ (e.g., a counterfactual effect, a derivative sign,
or an invariance claim).

\vspace{0.5ex}
\noindent\textbf{Mechanisms and observational regimes.}
Let $\mathcal M$ be a set of candidate mechanisms. An element $M\in\mathcal M$ should be understood broadly as a complete
mechanistic story, including:
(i) latent mechanism-level variables $S$ with some law $P_S$,
(ii) an observation or measurement channel $K$ generating proxies $X$ from $S$,
and (iii) any additional structural laws needed to answer mechanistic queries (e.g., an outcome law $P(Y\mid S)$, or a causal model
supporting interventions). Write this schematically as
\[
M \;=\; (P_S,\; K,\; \text{(structural laws / intervention semantics)}).
\]

Data collection enters through an \emph{observational regime} $r$. The regime specifies which environments or interventions are
observed and what variables are measured in each. Formally, let $\mathcal E$ index environments (which may include explicit
interventions), and let $O^{(e)}$ denote the observed variables in environment $e$ (often $O^{(e)}=(X^{(e)},Y^{(e)})$). The
regime induces an \emph{observation operator}
\[
T_r:\ \mathcal M \to \prod_{e\in\mathcal E} \mathcal P(\mathcal O^{(e)}),
\qquad
T_r(M)=\bigl\{P_{O}^{(e)}(M)\bigr\}_{e\in\mathcal E},
\]
mapping each mechanism $M$ to the family of observable distributions it implies under the regime. The single-environment proxy-only
setting in the main text corresponds to $\mathcal E=\{e_0\}$ and $O^{(e_0)}=X$, so $T_r(M)=P_X(M)$.

\vspace{0.5ex}
\noindent\textbf{Assumptions split into mechanism-side and observation-side.}
Let $\mathcal A$ denote a set of restrictions on mechanisms, i.e., a subset $\mathcal A\subseteq \mathcal M$. To reflect the
logic in the paper, decompose restrictions into
\[
\mathcal A \;=\; \mathcal A_{\mathrm{mech}}\ \cap\ \mathcal A_{\mathrm{obs}},
\]
where $\mathcal A_{\mathrm{mech}}$ constrains the latent mechanism and its interventional semantics (e.g., causal graph, functional
form, invariances, monotonicities, sparsity, conservation laws), and $\mathcal A_{\mathrm{obs}}$ constrains the measurement process
linking $S$ to $X$ (e.g., grouping/modularity, conditional independence, completeness/informativeness, known measurement error
structure, anchor variables, repeated measurements).

\vspace{0.5ex}
\noindent\textbf{Compatibility sets and identified sets.}
Fix an observational regime $r$ and an observed family of distributions
$\{P_{O}^{(e)}\}_{e\in\mathcal E}$ (the population object that the dataset estimates). The set of mechanisms compatible with this
evidence under assumptions $\mathcal A$ is
\[
\mathcal M\!\left(\{P_{O}^{(e)}\}_{e\in\mathcal E};\ \mathcal A,\ r\right)
\;=\;
\Bigl\{\,M\in\mathcal A:\ T_r(M)=\{P_{O}^{(e)}\}_{e\in\mathcal E}\,\Bigr\}.
\]
This generalizes the main-text compatibility set $\mathcal M(P_X)$, which corresponds to $\mathcal A=\mathcal M$ and a regime
with one environment and $O=X$.

Let $q:\mathcal M\to\mathcal Q$ be a mechanistic query (e.g., a causal effect, an intervention response curve, a derivative sign,
an invariance statement). The \emph{identified set} of answers under $(\mathcal A,r)$ is
\[
\mathcal I_q\!\left(\{P_{O}^{(e)}\}_{e\in\mathcal E};\ \mathcal A,\ r\right)
\;:=\;
\Bigl\{\,q(M):\ M\in \mathcal M(\{P_{O}^{(e)}\};\mathcal A,r)\,\Bigr\}.
\]
The query $q$ is \emph{point-identified} under $(\mathcal A,r)$ if $\mathcal I_q(\cdot)$ is a singleton; otherwise $q$ is only
\emph{partially identified}, and the width/geometry of $\mathcal I_q(\cdot)$ is the population-level object corresponding to the
\emph{structural identification uncertainty} emphasized in the paper.

\vspace{0.5ex}
\noindent\textbf{Definition (Identifying structure).}
Fix a query $q$ and an observational regime $r$. An \emph{identifying structure} for $q$ (at the population level) is any pair
\[
(\mathcal A_{\mathrm{mech}},\ \mathcal A_{\mathrm{obs}})\quad\text{together with}\quad r
\]
such that the identified set $\mathcal I_q(\{P_O^{(e)}\};\mathcal A,r)$ is a singleton (or, in applications, is sufficiently small
to support the intended scientific claim). The key point is that identifying structure is inherently joint: mechanism-side
restrictions alone do not identify $q$ if measurement can be reparameterized to preserve the same proxies, and observation-side
restrictions alone do not identify $q$ if the mechanism class remains too broad. Discriminating data collection is represented by
$r$: changing environments, adding interventions, or adding measurement channels can change the family $\{P_O^{(e)}\}$ and shrink
the compatibility set, even when ``more data'' within a fixed environment cannot.

Appendix~\ref{app:min_ident} instantiates this definition for a particular target: identifying a modular \emph{block} structure of
latent variables (up to within-block reparameterizations) and orienting relations among blocks. The point of the definition above
is broader: it cleanly separates (i) what is assumed about mechanisms, (ii) what is assumed about measurement, and (iii) what is
purchased by design.

\section{Minimal Structure for Identifiability (Blocks and Orientation)}
\label{app:min_ident}
Sections~\ref{sec:proxy_gap}--\ref{sec:narrative} emphasize non-identification: from proxies alone, many mechanistically distinct
stories can be observationally indistinguishable. This appendix records a complementary point that is essential for a non-nihilist
position: \emph{identifiability becomes possible} once one states and defends additional structure. The goal here is not to develop
a full theory, but to give a minimal, checkable template for when (i) a modular \emph{block} structure of latent drivers can be
recovered, and (ii) \emph{directions} among blocks can be oriented.
The results in this appendix are stated as templates: they summarize the kinds of additional assumptions under which identifiability can be obtained, rather than presenting new identifiability theorems proved in this paper.

\vspace{0.5ex}
\noindent\textbf{Setup.}
Let $S=(S_1,\dots,S_B)$ denote latent mechanism variables (``blocks''), and let $X=(X_1,\dots,X_p)$ denote observed proxies.
Assume proxies are generated from $S$ by either a deterministic measurement map $X=f(S)$ or a stochastic measurement channel
$X\mid S \sim K(\cdot\mid S)$. As in the main text, identification is understood \emph{up to within-block reparameterizations}:
for invertible $g_b$, replacing $S_b$ by $g_b(S_b)$ and pulling back the measurement channel leaves $P_X$ unchanged.

\subsection*{A. Block structure from grouped observables}
The strongest form of ``minimal structure'' is a \emph{grouping} assumption: each subset of observed variables is driven primarily
by one latent block, and cross-block relations show up as dependence \emph{between} groups.

\vspace{0.5ex}
\noindent\textbf{Assumption A1 (Grouping / modular measurement).}
There exists a partition of proxy coordinates into $B$ nonempty groups,
$\{1,\dots,p\}=\bigsqcup_{b=1}^B G_b$, such that for each $b$ the group $X_{G_b}$ depends on the latent block $S_b$ through a
group-specific map or channel:
\[
X_{G_b} \mid S \;\sim\; K_b(\,\cdot \mid S_b\,),
\qquad\text{and}\qquad
K(\cdot\mid S)=\prod_{b=1}^B K_b(\cdot\mid S_b).
\]
(Deterministic measurement is included by taking $K_b$ to be a point mass at $f_b(S_b)$.)

\vspace{0.5ex}
\noindent\textbf{Assumption A2 (Channel informativeness).}
Each group-channel $K_b$ is informative enough that distinct functions of $S_b$ cannot be ``hidden'' completely by measurement.
A standard sufficient condition is an injectivity/completeness-type property: if $\varphi(S_b)$ is nontrivial, then its footprint
is detectable from $X_{G_b}$ (e.g., the conditional expectation operator
$\varphi \mapsto \mathbb{E}[\varphi(S_b)\mid X_{G_b}=\cdot]$ is injective on a rich class).

\vspace{0.5ex}
\noindent\textbf{Assumption A3 (Non-separable cross-block dependence).}
The joint law of $S$ is not additively separable across the true blocks. One convenient way to state this is graphically:
there exists an undirected graph $H$ on $\{1,\dots,B\}$ such that for each edge $\{b,b'\}$, $S_b$ and $S_{b'}$ exhibit genuine
cross-block dependence (not removable by within-block reparameterizations), and $H$ is connected.

\vspace{0.5ex}
\noindent\textbf{Template A.1 (Block identifiability up to within-block reparameterization).}
Under Assumptions A1--A3, the grouping of proxies into latent blocks is identifiable from $P_X$ up to permutation of the block labels
and within-block reparameterizations. In particular, any alternative representation that preserves $P_X$ must correspond to the same
coarsest partition of proxy coordinates into conditionally independent groups given the latent blocks, modulo relabeling.

\vspace{0.5ex}
\noindent\emph{Discussion.}
This template is intentionally stated at a high level: it isolates what must be made explicit to move from ``many mechanisms fit the proxies''
to ``the proxies pick out a modular decomposition.'' The critical ingredients are (i) a defensible measurement grouping, (ii) informative channels
so that group-level signals are not annihilated by coarsening/noise, and (iii) cross-block dependence strong enough that blocks cannot be merged or
split without changing $P_X$. Recent causal representation learning results formalize identifiability from such grouped-observable structure and show
consistent estimation under concrete generative conditions.

\subsection*{B. Orientation: two minimal routes}
Once blocks are identified up to reparameterization, orienting relations among blocks requires \emph{additional asymmetry}. Two common sources are
(i) \emph{multiple environments} (invariance under interventions/shifts) and (ii) \emph{signed asymmetries} in the joint law.

\vspace{0.5ex}
\noindent\textbf{Route 1: Orientation from invariances across environments.}
Suppose the data come from environments $e\in\mathcal{E}$ with proxy distributions $P_X^{(e)}$ induced by latent laws $P_S^{(e)}$ and the
\emph{same} measurement channels $K_m$. Assume the causal mechanism among blocks is invariant while some environments perturb the distribution of
certain blocks (e.g., shift interventions).

\vspace{0.5ex}
\noindent\textbf{Assumption B1 (Invariant mechanism, shifting inputs).}
There exists a directed acyclic graph (DAG) $G$ over blocks $S_1,\dots,S_B$ such that for each environment $e$ the conditional law of each block
given its parents in $G$ is invariant in $e$, while the marginal law of some blocks varies with $e$.

\vspace{0.5ex}
\noindent\textbf{Template A.2 (Orientation from invariance).}
Under Assumptions A1--A3 and B1 (plus mild regularity so that invariances are testable at the level of $X$ via the informative channels),
the parent sets in $G$ are identifiable (up to the same within-block reparameterizations), hence the edges of $G$ can be oriented by selecting the
unique DAG whose implied invariance statements match the observed family $\{P_X^{(e)}\}_{e\in\mathcal{E}}$.

\vspace{0.5ex}
\noindent\emph{Discussion.}
This is the canonical ``scientific'' route: orientation is purchased by \emph{variation across environments} that leaves the mechanism invariant.
In practice it corresponds to: collect (or exploit) multi-environment data, then test which conditional relations remain stable across environments.

\vspace{0.5ex}
\noindent\textbf{Route 2: Orientation from a sign-faithful observable asymmetry.}
Alternatively, direction can be identified when the observed law contains an asymmetry whose sign is preserved by the admissible
measurement transformations and tracks the arrow direction on an already identified skeleton.

\vspace{0.5ex}
\noindent\textbf{Assumption B2' (Observable sign-faithful orientation score).}
For each adjacent unordered pair of identified blocks $\{b,b'\}$ in the skeleton $H$, there exists an observable measurable
functional
\[
\Delta_{bb'}=\Delta_{bb'}(X_{G_b},X_{G_{b'}})
\]
such that:
(i) $\Delta_{bb'}=-\Delta_{b'b}$ a.s.;
(ii) $\Delta_{bb'}$ is invariant under admissible within-block reparameterizations and preserved by the measurement model; and
(iii) whenever $b$ and $b'$ are adjacent,
\[
\mathbb E[\Delta_{bb'}] > 0 \iff b\to b',
\qquad
\mathbb E[\Delta_{bb'}] < 0 \iff b'\to b,
\]
and in particular $\mathbb E[\Delta_{bb'}]\neq 0$.

\vspace{0.5ex}
\begin{proposition}[Orientation from a sign-faithful observable asymmetry]\label{prop:orientation_asymmetry}
Assume A1--A3, and suppose the undirected skeleton $H$ over blocks has been identified. Under Assumption B2', each adjacent edge
$\{b,b'\}$ in $H$ is oriented by the sign of $\mathbb E[\Delta_{bb'}]$: it is oriented as $b\to b'$ if
$\mathbb E[\Delta_{bb'}]>0$, and as $b'\to b$ if $\mathbb E[\Delta_{bb'}]<0$.
\end{proposition}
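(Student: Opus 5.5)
The argument is essentially a direct unpacking of Assumption B2', so I would organize the proof around three checks: the sign is well defined, it is observable and representation-independent, and it points the right way. First, I fix an adjacent pair $\{b,b'\}$ in the identified skeleton $H$. By Assumptions A1--A3 (Template A.1), the proxy groups $G_b$ and $G_{b'}$ are determined by $P_X$ up to relabeling and within-block reparameterization. Hence the pair $(X_{G_b},X_{G_{b'}})$ is an observable object, and $\mathbb E[\Delta_{bb'}]$ is a functional of $P_X$ alone.

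Second, I would show that this sign does not depend on which admissible representation is chosen. By B2'(ii), $\Delta_{bb'}$ is invariant under the within-block bimeasurable reparameterizations $g_b$ of Propositions~\ref{thm:proxy_nogo} and~\ref{thm:proxy_nogo_channel}, and it is preserved by the measurement model. So any two representations in the compatibility set assign the same value to $\mathbb E[\Delta_{bb'}]$. A block relabeling $\pi$ only renames $(b,b')$. Antisymmetry, B2'(i), then gives $\Delta_{\pi(b)\pi(b')}$ consistently, with $\mathbb E[\Delta_{b'b}]=-\mathbb E[\Delta_{bb'}]$. This shows the rule is well defined on unordered edges: orienting via $\Delta_{bb'}$ and orienting via $\Delta_{b'b}$ give the same arrow.

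Third, B2'(iii) supplies both correctness and decisiveness. Adjacency forces $\mathbb E[\Delta_{bb'}]\neq 0$, so exactly one of the two cases applies. The stated equivalences then say that a positive sign holds iff $b\to b'$ in the true mechanism, and a negative sign holds iff $b'\to b$. Every compatible mechanism shares the same $P_X$, and therefore the same sign by the second step, so every compatible mechanism yields the same orientation. Orientation is therefore point-identified edge by edge. Applying this to each edge of $H$ finishes the proof.

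I expect the only delicate step to be the second one: making sure that "preserved by the measurement model" really means the expectation is constant across $\mathcal M(P_X)$ restricted to A1--A3. I would handle it by noting that every member of the compatibility set induces the same law of $(X_{G_b},X_{G_{b'}})$, so any measurable functional of the observables automatically has the same expectation. Invariance under reparameterization then only matters for showing that $\Delta_{bb'}$ is indexed consistently once the blocks are relabeled.
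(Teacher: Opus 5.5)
Your proposal is correct and follows essentially the same route as the paper's proof. Like the paper, you fix an edge, use B2'(ii) and observability to show that the sign of $\mathbb E[\Delta_{bb'}]$ is fixed by the observed law independently of latent coordinates, and then use sign-faithfulness together with $\mathbb E[\Delta_{bb'}]\neq 0$ to orient each edge uniquely. Your extra antisymmetry check and your remark that observability alone makes the expectation constant across the compatibility set are sound refinements, though your appeal to Template A.1 (which the paper does not prove) is unnecessary, since B2' and the identified-skeleton hypothesis already make $\Delta_{bb'}$ observable.
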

\noindent\textbf{Proof.}
Fix an adjacent edge $\{b,b'\}$ in the identified skeleton $H$. By Assumption B2', the functional $\Delta_{bb'}$ is observable and
invariant under admissible within-block reparameterizations, so the sign of $\mathbb E[\Delta_{bb'}]$ is determined by the observed
law of $(X_{G_b},X_{G_{b'}})$ and does not depend on the choice of latent coordinates within each block. The sign-faithfulness clause
implies
\[
\mathbb E[\Delta_{bb'}] > 0 \iff b\to b',
\qquad
\mathbb E[\Delta_{bb'}] < 0 \iff b'\to b.
\]
Because $\mathbb E[\Delta_{bb'}]\neq 0$, exactly one of these cases holds, so the sign of $\mathbb E[\Delta_{bb'}]$ orients the
edge uniquely. The argument is edgewise, so the result is local to adjacent edges in $H$.
\hfill\(\square\)

\vspace{0.5ex}
These templates are not claims that identifiability should always be expected; rather, they illustrate that claims of mechanism discovery must be paired with an
explicit statement of \emph{which} identifying structure is being assumed (grouping, invariances across environments, anchors/asymmetries, etc.)
and \emph{how} it is being tested. Absent such structure, Sections~\ref{sec:proxy_gap}--\ref{sec:narrative} apply: proxy-rich, high-dimensional settings generically leave large mechanism equivalence classes, and fluent single-narrative explanations are epistemically unsafe.

\end{document}